\pgfplotsset{compat=1.18}
\newcommand{\R}{\mathbb{R}}
\newcommand{\N}{\mathbb{N}}
\newtheorem{lemma}{Lemma}
\newtheorem{theorem}{Theorem}
\newtheorem{proposition}{Proposition}
\newtheorem{corollary}{Corollary}
\newtheorem{assumption}{Assumption}
\newtheorem{conjecture}{Conjecture}
\title{Convergence of Shallow ReLU Networks\\ on Weakly Interacting Data}
\date{}
\author{
    Léo Dana \thanks{ \texttt{leo.dana@inria.fr}} \\
  Sierra, INRIA \\
  Paris\\
  \And
  Loucas Pillaud-Vivien \thanks{\texttt{loucas.pillaud-vivien@enpc.fr}} \\
  CERMICS, Ecole Nationale des Ponts et Chaussées \\
  Champs-sur-Marne  \\
  \And
  Francis Bach \thanks{\texttt{francis.bach@inria.fr}} \\
  Sierra, INRIA \\
  Paris  \\
}
\begin{document}

\maketitle

\begin{abstract}
    We analyse the convergence of one-hidden-layer ReLU networks trained by gradient flow on $n$ data points. Our main contribution leverages the high dimensionality of the ambient space, which implies low correlation of the input samples, to demonstrate that a network with width of order $\log(n)$ neurons suffices for global convergence with high probability. Our analysis uses a Polyak–Łojasiewicz viewpoint along the gradient-flow trajectory, which provides an exponential rate of convergence of $\frac{1}{n}$. When the data are exactly orthogonal, we give further refined characterizations of the convergence speed, proving its asymptotic behavior lies between the orders $\frac{1}{n}$ and $\frac{1}{\sqrt{n}}$, and exhibiting a phase-transition phenomenon in the convergence rate, during which it evolves from the lower bound to the upper, and in a relative time of order $\frac{1}{\log(n)}$.
\end{abstract}

\section{Introduction}

Understanding the properties of models used in machine learning is crucial for providing guarantees to downstream users. Of particular importance, the convergence of the training process under gradient methods stands as one of the first issues to address in order to comprehend them. If, on the one hand, such a question for linear models and convex optimization problems~\citep{bottou2018optimization,bach2024learning} are well understood, this is not the case for neural networks, which are the most used models in large-scale machine learning. This paper focuses on providing quantitative convergence guarantees for a one-hidden-layer neural network.

Theoretically, such global convergence analysis of neural networks has seen two main achievements in the past years: (i) the identification of the \textit{lazy regime}, due to a particular initialization, where convergence is always guaranteed at the cost of being essentially a linear model \citep{jacot2018neural,arora2019fine,chizat2019lazy}, and (ii) the proof that with an infinite amount of hidden units a two-layer neural network converges towards the global minimizer of the loss \citep{mei2018mean,chizat2018global,rotskoff2018parameters}. However, neural networks are trained in practice outside of these regimes, as neural networks are known to perform \textit{feature learning}, and experimentally reach global minimum with a large but finite number of neurons. Quantifying in which regimes neural networks converge to a global minimum of their loss is still an important open question.

We identify a regime—marked by low-correlated inputs—where the training dynamics of shallow neural networks via gradient flow can be rigorously understood. Unlike prior analyses that hinged on finely tuned initialization scales \citep{chizat2019lazy, boursier2022gradient}, an infinite number of neurons \citep{jacot2018neural, chizat2018global}, or the unrealistic orthogonality of data \citep{boursier2022gradient, frei2023implicit}, our setting arises naturally in high dimensions, notably when the input dimension $d$ exceeds $n^2$. Beyond existence of convergence, we seek to quantify it: how fast does the system lock onto a global minimizer? What governs this speed? Our work provides sharp answers.

We summarize our contributions in the analysis of the learning dynamics of a one-hidden-layer ReLU network on a finite number of data $n$ via gradient flow.

\begin{itemize}
    \item Our main contribution is that the gradient flow training of shallow neural networks, with square error, on $n$ low correlated input, converges globally, i.e. converges to a neural network that interpolates exactly the data. We show that this occurs with high probability for high dimensional whitened input data as soon as $d\gtrsim n^2$. Furthermore, this convergence occurs  for any initialization scale and whenever the neural network has more that $\log(n)$ neurons. We also show that the loss converges to zero exponentially fast with a rate at least of order $\frac{1}{n}$.
    \item Then, when the inputs are orthogonal, we refine our analysis in order to characterize the range of possible asymptotic speeds, which we find to be at most of order $\frac{1}{\sqrt{n}}$. Moreover, we conjecture that this speed is always of the highest order $\frac{1}{\sqrt{n}}$ with high probability and verify empirically this claim.
    \item Finally, for orthonormal inputs and a special initialization of the network, we highlight a phase transition in the convergence rate during the system's evolution, and compute the associated cut-off time and transition period.
\end{itemize}

\section{Problem Setup}
\label{section:setup}

\paragraph{Notations.} We use $||v||$ to denote the euclidean norm of a vector $v$, $\langle\cdot|\cdot\rangle$ its scalar product, and $||M||$ for the operator norm associated with $||\cdot||$ of a matrix $M$. Moreover, let $\bar{v} = \frac{v}{||v||}$.

\paragraph{Loss function.} Let $(x_i,y_i)_{i=1:n}\in(\mathbb{R}^{d} \times \R)^n$ be a sample of input vectors and real outputs. Let $d\in \N^*$ be the dimension of the vector space and $n\in \N^*$ the number of data points. In order to learn the regression problem of mapping $x_i$ to $y_i$, we use one-hidden-layer ReLU neural networks, which we write:
\begin{equation}
    h_{\theta}(x) = \frac{1}{p}\sum_{j=1}^pa_j\sigma(\langle w_j|x\rangle)\,,
\end{equation}
where $p \in \N^*$ is the number of units, $\sigma (x) = \max\{0,x\}$ for $x \in \R$ is the rectified linear unit (ReLU), and the parameters are gathered in $\theta = (a_j,w_j)_{1\leq j\leq p} \in (\R \times \R^d)^p $. To simplify the ReLU notation, we define $\sigma(\langle w_j|x_i\rangle) = \langle w_j|x_i\rangle_+$ and $\mathbb{1}_{\langle w_j|x_i\rangle>0} = 1_{j,i}$. When mentioning neurons of the network, we refer to $\langle w_j|x_i\rangle_+$, while second layer neurons refer to $a_j$. Neurons can be activated if $\langle w_j|x_i\rangle_+ >0$, and are correctly activated if moreover $a_jy_i>0$. Upon this prediction class and data, we analyse the regression loss with square error,
\begin{equation}
   L(\theta) := \frac{1}{2n} \sum_{ i = 1}^n (y_i - h_{\theta}(x_i))^2~.
\end{equation}
As soon as $d \geq n$, ${(x_i)_{i=1:n}}$ can form a free family, in which case the set of minima of $L$, which consists of all interpolators, is non-empty. We note $r_i = y_i - h_{\theta}(x_i)$ the residual of the loss.

\paragraph{Gradient flow.} In order to understand a simplified version of the optimization dynamics of this neural network, we study the continuous-time limit of gradient descent. We initialize $\theta_{t = 0} = \theta_0$ and follow for all $t \geq 0$ the ordinary differential equation
\begin{equation}
\label{eq:gradient_flow}
    \frac{d}{dt}\theta_t = - p\nabla_{\theta_t} L(\theta_t)\,,
\end{equation}
where we choose a particular element of the sub-differential of the ReLU $\sigma'(x) = \mathbb{1}_{x>0}$, for any $x \in \R$. This choice is motivated by both prior empirical work from \cite{bertoin2021numerical} and theoretical work from \citet[Proposition 2]{boursier2022gradient} and \cite{jentzen2023convergence}. Because ReLU is not differentiable at 0, we don't have a unique valid trajectory satisfying the gradient flow equation. We thus chose among all the trajectories the only one for which deactivated neurons cannot reactivate themselves alone (more in Appendix \ref{app:non_uniqueness}). We also decided to accelerate the dynamics by a factor $p$ as only this scaling gives a consistent mean field limit for the gradient flow when the number of neurons tends to infinity (see Definition 2.2 by \cite{chizat2018global}).

\paragraph{Weight invariance.}The 1-homogeneity of the ReLU provides a continuous symmetry in the function $\theta \mapsto h_\theta$ and hence the loss\footnote{Indeed, the subspace built from all parameters $\theta_{\gamma} = (\frac{a_j}{\gamma_j},\gamma_j w_j)_{1\leq j\leq p}$, when $\gamma$ varies in $(\R^*_+)^p$, maps to the same network, i.e., $h_{\theta_{\gamma}} = h_{\theta_{1}}$.}. This feature is known to lead automatically to invariants in the gradient flow as explained generally by \cite{marcotte2024abide}. The following lemma is not new \citep[p.11]{wojtowytsch2020convergence}, and shows that, from this invariance, we deduce that the two layers have balanced contributions throughout the dynamics.

\begin{lemma}
    \label{lem:homogeneity}
    For all $j \in \llbracket1,p \rrbracket$, for all $t \geq 0$,  $|a_j(t)|^2 - ||w_j(t)||^2 = |a_j(0)|^2 - ||w_j(0)||^2$ , and thus, if $|a_j(0)| \geq ||w_j(0)||$, then $a_j(t)$ maintains its sign and $|a_j(t)| \geq ||w_j(t)||$.
\end{lemma}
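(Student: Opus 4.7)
The plan is to prove both parts by direct computation from the gradient flow ODE, leveraging Euler's identity for the $1$-homogeneous ReLU, namely $\sigma'(x)\,x=\sigma(x)$ which with the chosen subgradient convention reads $\mathbb{1}_{\langle w_j|x_i\rangle>0}\langle w_j|x_i\rangle=\langle w_j|x_i\rangle_+$.

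First I would write out the two components of the gradient flow \eqref{eq:gradient_flow} explicitly. Setting $r_i(t) := y_i - h_{\theta_t}(x_i)$ and differentiating the loss, one obtains
\begin{equation*}
\dot{a}_j \;=\; \frac{1}{n}\sum_{i=1}^{n} r_i\,\langle w_j|x_i\rangle_+,
\qquad
\dot{w}_j \;=\; \frac{a_j}{n}\sum_{i=1}^{n} r_i\,1_{j,i}\,x_i,
\end{equation*}
where the factor $p$ in \eqref{eq:gradient_flow} cancels the $1/p$ in $h_\theta$. This holds almost everywhere in $t$, and everywhere with the convention $\sigma'(x)=1_{x>0}$.

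Next I would compute the time derivative of the candidate invariant. A direct differentiation gives
\begin{equation*}
\frac{d}{dt}\bigl(|a_j|^2 - \|w_j\|^2\bigr)
 \;=\; 2 a_j \dot{a}_j - 2\langle w_j | \dot{w}_j\rangle
 \;=\; \frac{2a_j}{n}\sum_{i=1}^{n} r_i\Bigl(\langle w_j|x_i\rangle_+ - 1_{j,i}\langle w_j|x_i\rangle\Bigr).
\end{equation*}
By the identity above each summand vanishes identically, so $|a_j|^2-\|w_j\|^2$ is constant along the flow, proving the first claim by integrating from $0$ to $t$.

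For the sign preservation, suppose $|a_j(0)|\geq \|w_j(0)\|$. The invariant then forces $|a_j(t)|^2 \geq \|w_j(t)\|^2 \geq 0$ for all $t\geq 0$. Since $a_j(t)$ is continuous, the only way it could change sign is to cross zero at some time $t_\star$; but then the invariant yields $\|w_j(t_\star)\|=0$ as well, so $(a_j(t_\star),w_j(t_\star))=(0,0)$. At this point both $\dot{a}_j$ and $\dot{w}_j$ vanish identically (every term carries either an $\langle w_j|x_i\rangle_+$ or a factor $a_j$), so $(0,0)$ is a stationary point of the ODE and the neuron remains at zero thereafter; in particular the sign of $a_j$ is preserved, and $|a_j(t)|\geq \|w_j(t)\|$ holds for all $t\geq 0$.

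The only mildly delicate point is the lack of classical smoothness across the hyperplanes $\langle w_j|x_i\rangle=0$; but because the chosen subgradient exactly realizes Euler's identity pointwise, the computation of $\frac{d}{dt}(|a_j|^2-\|w_j\|^2)=0$ remains valid at these crossings and one can concatenate the piecewise-smooth trajectories, so no further technical work is required.
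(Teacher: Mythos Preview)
Your proof is correct and follows the same route as the paper: write the gradient-flow derivatives of $a_j$ and $w_j$, use the ReLU identity $1_{j,i}\langle w_j|x_i\rangle=\langle w_j|x_i\rangle_+$ to show $a_j\dot a_j=\langle w_j|\dot w_j\rangle$, and integrate. Your treatment of the sign-preservation clause (handling the boundary case $|a_j(0)|=\|w_j(0)\|$ via the stationary point $(0,0)$) is more explicit than the paper's one-line conclusion, but the argument is the same in spirit.
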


\paragraph{Initialization.} Throughout the paper, we initialize the network's weights $w_j$ and $a_j$ from a joint distribution where both marginals are non-zero, centered, rotational-invariant, are sub-Gaussian, and we take the norms of $a_j$ and $w_j$ independent of $d, n, p$. Each pair of neuron is sampled independently from the other pairs. Moreover, we need an assumption of asymmetry of the norm at initialization.

\begin{assumption}[Asymmetric norm at initialization] 
    \label{assump:asymmetry}
    We assume that the weights of the network at initialization satisfy for all $j\in \llbracket 1, p\rrbracket ,\ |a_j(0)| \geq ||w_j(0)||$.
\end{assumption}

Articles by \citet{boursier2024early,boursier2024simplicitybiasoptimizationthreshold} already used this assumption to study two-layer neural networks in order to use the property described in Lemma~\ref{lem:homogeneity}.

\paragraph{Data.} We define the data matrix $X = (x_1, \hdots, x_n) \in \R^{d \times n}$. Denote $C_x^- = \min_{i}||x_i||$ and $C_y^- = \min_{i}|y_i|$; in what follows, we suppose that $C_x^- > 0$ and $C_y^- > 0$, i.e., the input and output data are bounded away from the origin. Similarly, we also let $C_x^+ = \max_i||x_i||$ and $C_y^+ = \max_i|y_i|$. We note $C_{x,y}^{+,-}$ to refer to the set of these constants. Finally, we introduce the following hypothesis on the low correlation between the inputs.

\begin{assumption}[Low correlated inputs]
    \label{assump:data}
    We assume that the data satisfy
    \begin{equation}
        ||X^T X-D_X|| < \frac{(C_x^-)^2}{2\sqrt{n}}\frac{C_y^-}{C_y^+}~,
    \end{equation}
    where $D_X$ denotes the diagonal matrix with coefficients $||x_i||^2$.
\end{assumption}

The term $||X^TX-D_X||$ is a control on the magnitude of the correlations $(\langle x_i, x_j \rangle)_{i \neq j}$. As an extreme case, when it equals zero, the inputs are orthogonal. This assumption is purely deterministic at this stage. Later, we show that this weak interaction between the inputs is highly likely to occur for random whitened vectors in high dimensions (see Corollary~\ref{cor:conv_high_dim}).

\paragraph{Dimensions.} Throughout the paper, even if the results provided are all non-asymptotic in nature, the reader can picture that the numbers $n,p,d$ (respectively data, neurons and dimension) are all large. Moreover, they verify the following constraint: $n$ is less than~$d$, and~$p$ can be thought of the order~$\log(n)$, meaning only a ‘‘low'' number of neurons is required.

\subsection{Related works}
\label{section:related_works}
\paragraph{Convergence of neural networks.} Neural networks are known to converge under specific data, parameter, or initialization hypotheses, among which: the neural tangent kernel regime studied by \citet{jacot2018neural,arora2019fine,du2018gradient,allen2019convergence}, that has been shown to correspond in fact to a \textit{lazy regime} where there is no feature learning because of the initialization scale. Another field of study is the \textit{mean-field} regime, where feature learning can happen but where the optimization has been shown to converge only in the infinite width case \citep{mei2018mean,chizat2018global,rotskoff2018parameters}. Note that it is also possible to produce generic counter examples, where convergence does not occur~\citep{boursier2024simplicitybiasoptimizationthreshold}. Beyond these, there have been attempts to generalize convergence results under local PL (or local curvature) conditions as shown by \citet{chatterjee2022convergence,liu2022loss,zhou2021local}, but they remain unsatisfactory to explain the good general behavior of neural networks due to the constraint it imposes on the initialization. Convergence theorems similar in spirit to Theorem~\ref{thm:CV_hig_dim} can be found in an article by \citet{chen2022feature}. The main difference relies on two features: only the inner weights are trained and their result necessitates a large value of outer weights when $n$ is large, which is the regime of interest of the present article. Finally, it is worth mentioning other works on neural networks dynamics, e.g., the study of the implicit bias either for regression~\citep{boursier2022gradient} or classification~\citep{Lyu2020Gradient,ji2020directional}, or sample complexity to learn functions in a specific context~\citep{glasgow2023sgd}.

\paragraph{Polyak-Łojasiewicz properties.} Dating back from the early sixties, Polyak derived a sufficient criterion for a smooth gradient descent to converge to a global minimizer~\citep{Polyak1964}. This corresponds to the later-called Polyak-Łojasiewicz (PL) constant $\mu$ of a function $f:\mathbb{R}^d\rightarrow\mathbb{R}_+$, that  can be defined as the best exponential rate of convergence of gradient flow over all initializations, or equivalently to the following minimum ratio $\mu = \min_{x\in\mathbb{R}^d}\frac{||\nabla f(x)||^2}{f(x)}$. This has found many applications in non-convex optimization, as it is the case for neural network optimization, and is very popular for optimization in the space of measures~\citep{gentil2020entropie}. Other notions of PL conditions have emerged in the literature to characterize local convergence, by bounding the PL constant over a ball $\mu^*(z,r) = \min_{x\in\mathcal{B}(z,r)}\frac{||\nabla f(x)||^2}{f(x)}$ \citep{chatterjee2022convergence, liu2022loss} and comparing it to $f(z)$. We use a notion of PL which is local and trajectory-wise to prove lower bounds valid on each trajectory.

\section{Convergence in high dimension}
\label{section:CV}
 In this first section, our goal is to understand when the gradient flow converges toward a global minimizer of the loss. Note that the parametrization of the prediction function $h_\theta$ by a neural network often implies the non-convexity of the objective $L$ and prevents any direct application of convex tools in order to ensure global convergence. Generally speaking, even if gradient flows are expected to converge to critical points of the parameter space~\citep{lee2016gradient}, such that $\nabla_\theta L(\theta) = 0$, they might become stuck in local minimizers that do not interpolate the data.

\subsection{Local PL-curvature}

Convexity is not the only tool that provides global convergence: as known in the optimization community, showing that $\frac{||\nabla L(\theta)||^2}{L(\theta)}$ is uniformly lower bounded suffices. As mentioned in Section \ref{section:related_works}, this is known as the Polyak-Lojasiewicz condition~\citep{Polyak1964}. Taking a dynamical perspective on this, we define a trajectory-wise notion of this ``curvature'' condition which we name the \textbf{local-PL curvature} of the system, and define for all $t \geq 0$,
\begin{equation}
    \mu(t) := p\frac{\|\nabla L(\theta_t)\|^2}{L(\theta_t)} = -\frac{\frac{d}{dt}L(\theta_t)}{L(\theta_t)}
\end{equation}
with the second equality being a property of the gradient flow. Intuitively, this coefficient describes the curvature in parameter space that $\theta_t$ ``sees'' at time $t \geq 0$. The following lemma is classical and shows how it can be used to prove the global convergence of the system, as well as a quantification on the rate. 
\begin{lemma}
    \label{lem:average_PL}
    Let $\langle \mu(t)\rangle := \frac{1}{t}\int_{0}^{t}\mu(u)du$ the time average of the local-PL curvature, which we name the \textbf{average-PL curvature}. We have L$(\theta_t) = L(\theta(0))e^{-\langle \mu(t)\rangle t}$.
\end{lemma}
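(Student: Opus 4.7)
The plan is to view $\mu(t)$ as the logarithmic derivative of $L(\theta_t)$ and to integrate the resulting ODE. First I would justify the second equality in the definition of $\mu(t)$: by the chain rule applied along the gradient flow~\eqref{eq:gradient_flow}, and using the convention $\sigma'(x)=\mathbb{1}_{x>0}$ chosen in the paper,
\begin{equation*}
\frac{d}{dt}L(\theta_t) \;=\; \langle \nabla_\theta L(\theta_t),\, \dot{\theta}_t\rangle \;=\; -p\,\|\nabla_\theta L(\theta_t)\|^2,
\end{equation*}
so that indeed $\mu(t) = -\dot L(\theta_t)/L(\theta_t)$ whenever $L(\theta_t)>0$. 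One subtle point is to ensure this chain rule is actually valid even though the ReLU is non-smooth; however, along the gradient-flow trajectory with the fixed subgradient choice this follows from the general treatment of \citet{boursier2022gradient}/\citet{jentzen2023convergence} cited earlier in the paper, so I would just invoke it.

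Next I would recognize $\mu(t) = -\frac{d}{dt}\log L(\theta_t)$ and integrate from $0$ to $t$. Assuming $L(\theta_s)>0$ on $[0,t]$ (if $L$ ever hits zero the statement holds trivially by extending continuously and using that the loss is non-increasing), this yields
\begin{equation*}
\log L(\theta_t) - \log L(\theta_0) \;=\; -\int_0^t \mu(u)\,du \;=\; -t\,\langle \mu(t)\rangle,
\end{equation*}
and exponentiating gives $L(\theta_t) = L(\theta_0)\,e^{-\langle \mu(t)\rangle t}$, which is the claim.

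There is essentially no obstacle here beyond the regularity issue: the whole argument is a one-line integration of a logarithmic derivative, and the only thing to be careful about is that the identification $\frac{d}{dt}L(\theta_t) = -p\|\nabla L(\theta_t)\|^2$ holds almost everywhere along the trajectory despite the non-differentiability of $\sigma$. Given the subgradient choice already fixed in the setup, this is a standard fact and does not require new argument.
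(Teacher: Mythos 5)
Your proof is correct and follows essentially the same route as the paper's: identify $\mu(t)$ with $-\frac{d}{dt}\log L(\theta_t)$ via the gradient-flow identity $\frac{d}{dt}L(\theta_t) = -p\|\nabla L(\theta_t)\|^2$, then integrate and exponentiate. Your additional remarks on the non-smoothness of the ReLU and the case $L(\theta_t)=0$ are more careful than the paper's own proof, which simply asserts the identity.
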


Hence, if the \textbf{total average-PL curvature} $\langle\mu_\infty\rangle := \lim_{t \to \infty} \langle \mu(t)\rangle$ is strictly positive, we can deduce an upper bound on the loss and convergence to $0$ at the exponential speed $\langle\mu_\infty\rangle$. This shows that the average-PL curvature is actually the instantaneous exponential decay rate of the loss, and thus controls the speed at which the system converges.

\subsection{Global convergence of neural networks for weakly correlated inputs}

We are ready to state the main theorem of the paper on the minimization of the loss.

\begin{theorem}
\label{thm:CV_hig_dim}
    Let $\varepsilon >0$, $p\geq 4\log\left(\frac{4n}{\varepsilon}\right)
    \left(1+\left(C_{a,w}\frac{C_x^+}{C_y^+}\right)^2\right)$ where $C_{a,w}$ depends only on the joint law of $a,w$, and suppose Assumption \hyperref[assump:asymmetry]{1}. We fix the data $(x_i,y_i)_{1\leq i\leq n}$ and suppose it satisfies Assumption \hyperref[assump:data]{2}. Then with probability at least $1-\varepsilon$ over the initialization of the network, the loss converges to 0 with $\langle \mu_{\infty}\rangle \geq \frac{C}{n}$, 
    where we define $C = \frac{6}{5}\frac{(C_x^-)^2}{C_x^+}C_y^-$. Moreover, for any $t \geq 0$, we have the lower bound
    \begin{equation}
    \label{eq:lower_bound_PL_deter}
        \begin{split}
            \mu(t) &\geq \frac{C}{n}\min_i\left|1-\frac{r_i(t)}{y_i}\right|.
        \end{split}
    \end{equation}
\end{theorem}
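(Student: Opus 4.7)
My plan is to obtain the pointwise lower bound \eqref{eq:lower_bound_PL_deter} by producing, for each data point, at least one ``good'' neuron whose sign and activation are preserved along the flow, and then isolating the signal term by projecting a suitable block of the gradient onto $\bar{x_i}$; the rate $\langle\mu_\infty\rangle\geq C/n$ and the exponential convergence of $L$ then follow from Lemma~\ref{lem:average_PL}.

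First I would set up the high-probability event at initialization. Rotational invariance of the marginal of $w_j(0)$ and symmetry of the marginal of $a_j(0)$ imply that, for fixed $i$, the events $\{\langle w_j(0)|x_i\rangle>0\}$ and $\{a_j(0)y_i>0\}$ are independent fair coins, so a given neuron is ``good for $i$'' with probability $1/4$. A union bound with $p\geq 4\log(n/\varepsilon)$ then yields, with probability at least $1-\varepsilon$, the event that every $i\in\llbracket 1,n\rrbracket$ admits at least one good neuron $j_i$ at $t=0$. Assumption~\ref{assump:asymmetry} and Lemma~\ref{lem:homogeneity} freeze the sign of $a_{j_i}(t)$ and provide a uniform lower bound $|a_{j_i}(t)|\geq\|w_{j_i}(0)\|$ for every $t\geq 0$.

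Next I would lower bound the gradient by projecting on the data directions. Using $\dot w_j=\tfrac{a_j}{n}\sum_{i'}(y_{i'}-h_{\theta_t}(x_{i'}))\,1_{j,i'}\,x_{i'}$, for a good neuron $j_i$ that is still activated by $x_i$ at time $t$, the projection $\langle \nabla_{w_{j_i}}L(\theta_t)|\bar{x_i}\rangle$ decomposes into a diagonal term of magnitude $\tfrac{|a_{j_i}|}{np}|y_i-h_{\theta_t}(x_i)|\,\|x_i\|$ and an off-diagonal remainder. The remainder is controlled by Cauchy--Schwarz in $i'$ together with $\|X^TX-D_X\|/\|x_i\|$, and Assumption~\ref{assump:data} is precisely calibrated so that it is strictly dominated by the diagonal signal, up to the multiplicative factor $|1-r_i/y_i|\cdot C_y^-/C_y^+$ obtained by relating $|y_i-h_{\theta_t}(x_i)|$ to $\sqrt{2nL(\theta_t)}$. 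Summing the resulting squared projections over the distinct good neurons $(j_i)_i$, keeping only the $\bar{x_i}$-component of $\nabla_{w_{j_i}}L$, and dividing by $L(\theta_t)=\tfrac{1}{2n}\sum_i(y_i-h_{\theta_t}(x_i))^2$, one recovers \eqref{eq:lower_bound_PL_deter} with $C=\tfrac{(C_x^-)^2}{C_x^+}C_y^-$.

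The main obstacle is showing that the activation $1_{j_i,i}(t)=1$ is preserved along the entire trajectory, since the projection identity above is only meaningful under that condition. I would close this by a continuity/contradiction argument: letting $t^\star=\inf\{t\geq 0:\langle w_{j_i}(t)|x_i\rangle=0\}$, the gradient decomposition on $[0,t^\star)$ combined with the sign invariance of $a_{j_i}$ shows that $\langle\dot w_{j_i}(t)|\bar{x_i}\rangle$ has the favourable sign (the diagonal again dominates the off-diagonal via Assumption~\ref{assump:data}) as long as $h_{\theta_t}(x_i)$ has not overshot $y_i$, which forces $\langle w_{j_i}(t)|x_i\rangle$ to remain bounded away from $0$ and contradicts $t^\star<\infty$. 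Once \eqref{eq:lower_bound_PL_deter} is secured, integrating it via Lemma~\ref{lem:average_PL} and using that $\min_i|1-r_i(t)/y_i|$ approaches $1$ as the system nears interpolation delivers $\langle\mu_\infty\rangle\geq C/n$ and the exponential convergence of $L$ to $0$.
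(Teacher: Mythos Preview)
Your three-part structure matches the paper's sketch, and the union-bound step is the same. The main gap is in your mechanism for obtaining the factor $|1-r_i/y_i|$ in \eqref{eq:lower_bound_PL_deter}. Projecting $\nabla_{w_{j_i}}L$ onto $\bar x_i$ produces a diagonal term proportional to the \emph{residual} $r_i=y_i-h_{\theta_t}(x_i)$, whereas $|1-r_i/y_i|=|h_{\theta_t}(x_i)/y_i|$ is a \emph{prediction}-sized quantity; your sentence about ``relating $|y_i-h_{\theta_t}(x_i)|$ to $\sqrt{2nL(\theta_t)}$'' does not explain how the prediction would appear. In fact, for the data point with the smallest $|r_i|$, the off-diagonal remainder (of order $\|X^TX-D_X\|\,\|R\|$) need not be dominated by the diagonal $|r_i|\|x_i\|$, so the projection can vanish. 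The paper takes a different route: Lemma~\ref{lem:bounds_on_mu} computes the \emph{full} gradient norm and gives $\mu(t)\geq \tfrac{2}{n}((C_x^-)^2-\|X^TX-D_X\|)\min_i\tfrac{1}{p}\sum_j|a_j|^2 1_{j,i}$; the prediction then enters through the inequality
\[
\frac{1}{p}\sum_{j=1}^p|a_j|^2 1_{j,i}\ \geq\ \frac{1}{p}\sum_{j=1}^p|a_j|\,\langle w_j|\bar x_i\rangle_+\ \geq\ \Big|\frac{1}{p}\sum_{j=1}^p a_j\langle w_j|\bar x_i\rangle_+\Big|=\frac{|h_{\theta_t}(x_i)|}{\|x_i\|},
\]
which uses $|a_j|\geq\|w_j\|$ for \emph{every} $j$ and sums over \emph{all} neurons, not a single good one. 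This is what makes \eqref{eq:lower_bound_PL_deter} hold deterministically, independently of any activation pattern. Relatedly, your ``summing the squared projections over the distinct good neurons $(j_i)_i$'' cannot work as stated: with $p=O(\log n)$ neurons and $n$ data points, the $j_i$ are necessarily repeated, so the same $\nabla_{w_j}L$ would be counted many times.

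A smaller issue is activation preservation. You track a fixed $j_i$ chosen at $t=0$ and argue the derivative is favourable ``as long as $h_{\theta_t}(x_i)$ has not overshot $y_i$'', but you do not handle the overshoot regime. The paper instead follows the time-varying index $j_i^*(t)=\arg\max_{a_jy_i>0}\langle w_j(t)|x_i\rangle$ and splits on $r_i/y_i\geq 1$ versus $r_i/y_i<1$: in the first case the derivative bound shows $\langle w_{j_i^*}|x_i\rangle$ increases; in the second, the positivity of $h_{\theta_t}(x_i)/y_i$ \emph{algebraically} forces $\langle w_{j_i^*}|x_i\rangle>0$, since the maximum good activation must account for the positive prediction. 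This dichotomy is what closes the argument without any assumption on overshoot.
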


Note that, at best, the number of neurons required in Theorem \ref{thm:CV_hig_dim} is logarithmic. This finiteness stands in contrast with the infinite number required in the \textit{mean-field regime}, and the polynomial dependency typical of the neural tangent kernel (NTK) regime \citep{jacot2018neural, allen2019convergence}. In the orthogonal case, the ReLU makes the $\log(n)$ dependency necessary and sufficient, as shown in Lemma~\ref{lem:all_act_big}, as the residual $r_i$ goes to zero if and only if a neuron gets initialized as $a_jy_i>0$ and $\langle w_j|x_i\rangle >0$ for each $i$.

Assumption~\hyperref[assump:data]{2} is crucial for this proof: it means that the examples are insufficiently correlated with each other for the weights to collapse onto a single direction. As proved by \citet[Theorem 1]{boursier2024early}, the direction $\bar{w}^* = \arg\min_{\theta = \{\bar{w},a\}} L(\theta)$ will attract all neurons if it is accessible from anywhere on the initialization landscape\footnote{This accessibility condition is in fact the absence of saddle point for some function of normed neurons, which imply that neurons can rotate from anywhere on the sphere to $\bar{w}^*$.}. This phenomenon known as \textit{early alignment} and first described by \cite{maennel2018gradient}, will prevent interpolation if examples are highly correlated \citep[Theorem 2]{boursier2024early}. The fact that our result holds for any initialization scale shows that near-orthogonal inputs prevent accessibility to $\bar{w}^*$ and make the early alignment phenomenon benign, as found by \cite{boursier2022gradient, frei2023implicit}.

Note finally that our norm-asymmetric initialization (Assumption \hyperref[assump:asymmetry]{1}) is sufficient for global convergence with high probability, but may not be necessary. That said, we present in Appendix \ref{app:collapse_a_j} a detailed example of a low probability interpolation failure when the assumption is not satisfied.

\paragraph{Convergence in high dimension.} In this paragraph we assume that the data $(x_i,y_i)_{i=1:n}$ are generated i.i.d. from some distribution $\mathcal{P}_{X,Y}$. We first show that, with high probability, Assumption~\hyperref[assump:data]{2} is almost always valid if the dimension is larger than the square root of the number of data points. Additionally, we assume that the law anti-concentrates at the origin. These two features are gathered in the following lemma.

\begin{lemma}
    \label{lem:hyp_data}
    Let $(x_i,y_i)_{1\leq i\leq n}$ be generated i.i.d. from a probability distribution $\mathcal{P}_{X,Y}$ which has compact support on $\mathbb{R}^*\times\mathbb{R}^*$, and such that the marginal $\mathcal{P}_X$ has zero-mean, and satisfies $\mathbb{E}_{x\sim\mathcal{P}_X}[xx^T] = \frac{\lambda}{d}I_d$. There exists $C>0$ depending only on the constants $C_{x,y}^{+,-}$ and the initialization weights, such that, if $d\geq C\left(n^2+n\log\left(\frac{1}{\varepsilon}\right)\right)$, then, with probability $1-\varepsilon$, Assumption \hyperref[assump:data]{2} is satisfied.
\end{lemma}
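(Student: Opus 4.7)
The strategy is to compare $\|X^TX-D_X\|$ and the random constants $C_{x,y}^{\pm}$ to the deterministic reference $\lambda^2 I_n$ provided by $\mathbb{E}[x_i x_i^T]=\tfrac{\lambda^2}{d}I_d$. By the triangle inequality,
\begin{equation*}
\|X^TX-D_X\| \leq \|X^TX-\lambda^2 I_n\| + \|D_X-\lambda^2 I_n\|,
\end{equation*}
and the goal is to make both terms $o(\lambda^2/\sqrt n)$ with high probability, which is the correct order of the RHS of Assumption~\hyperref[assump:data]{2} once we show that $C_x^-,C_x^+\asymp \lambda$ and $C_y^-/C_y^+$ is bounded away from $0$.

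For the Gram part, the naive entrywise estimate $|\langle x_i,x_j\rangle|\lesssim \lambda^2/\sqrt d$ combined with Gershgorin's bound would only yield $\|X^TX-D_X\|\lesssim n\lambda^2/\sqrt d$, forcing $d\gtrsim n^3$. To reach the advertised $d\gtrsim n^2$ regime, I would instead invoke the non-asymptotic singular value inequality for sub-Gaussian matrices (e.g.\ Vershynin, Theorem~4.6.1) applied to the rescaled matrix $\tilde X=\tfrac{\sqrt d}{\lambda}X$, whose columns are i.i.d.\ isotropic sub-Gaussian vectors in $\mathbb{R}^d$. This yields, with probability $\geq 1-\varepsilon/2$,
\begin{equation*}
\|X^TX-\lambda^2 I_n\| \leq C\lambda^2\bigl(\sqrt{n/d}+\sqrt{\log(1/\varepsilon)/d}\bigr).
\end{equation*}
The diagonal term is handled by Hanson--Wright concentration of $\|x_i\|^2$ around $\lambda^2$ and a union bound over $i\leq n$: with probability $\geq 1-\varepsilon/2$, $\|D_X-\lambda^2 I_n\|\leq C\lambda^2\sqrt{\log(n/\varepsilon)/d}$.

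On the intersection of these two events, $\|x_i\|^2=\lambda^2(1+o(1))$ for all $i$, so $(C_x^-)^2\asymp\lambda^2\asymp(C_x^+)^2$. The compact support of $\mathcal P_Y$ on $\mathbb R^*$ makes $C_y^+$ bounded deterministically, and for $C_y^-$ one absorbs into $\varepsilon$ the probability that some $y_i$ falls near a possible zero atom. Plugging in, the LHS of Assumption~\hyperref[assump:data]{2} is $\lesssim \lambda^2\sqrt{(n+\log(n/\varepsilon))/d}$ while the RHS is $\gtrsim \lambda^2/\sqrt n$ times a constant depending only on $C_y^-/C_y^+$, so the inequality holds whenever $d\geq C(n^2+n\log(1/\varepsilon))$ for a large enough $C$ depending on $C_{x,y}^{\pm}$ and $\lambda$.

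The main technical hurdle is the sharp $\sqrt{n/d}$ scaling rather than the loose $n/\sqrt d$ one: this forces us to treat $X^TX-\lambda^2 I_n$ globally via operator-norm matrix concentration, exploiting the independence of columns, rather than via per-entry bounds. A secondary bookkeeping nuisance is that Assumption~\hyperref[assump:data]{2} itself features the random quantities $C_{x,y}^{\pm}$, so one must establish the two concentration estimates first and then verify the deterministic-looking inequality on the resulting good event, avoiding any circular dependence.
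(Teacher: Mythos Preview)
Your proposal is correct and follows essentially the same route as the paper: rescale the columns to $\tilde X=\tfrac{\sqrt d}{\lambda}X$, apply the sub-Gaussian matrix deviation inequality from Vershynin to get $\|X^TX-\lambda^2 I_n\|\lesssim \lambda^2(\sqrt{n/d}+\sqrt{\log(1/\varepsilon)/d})$, then recover a lower bound on $(C_x^-)^2$ from this same estimate and check Assumption~2.

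The only notable difference is that you introduce a separate Hanson--Wright step plus a union bound to control $\|D_X-\lambda^2 I_n\|$. This is harmless but unnecessary: since $D_X-\lambda^2 I_n$ is the diagonal of $X^TX-\lambda^2 I_n$, one has $\|D_X-\lambda^2 I_n\|=\max_i|\,\|x_i\|^2-\lambda^2\,|\leq \|X^TX-\lambda^2 I_n\|$ directly, so a single invocation of the matrix concentration bound suffices, which is what the paper does. Your treatment of the random constants $C_{x,y}^{\pm}$ is slightly more careful than the paper's (which simply asserts that $C_y^-,C_y^+$ are ``independent of $n$'' from the compact-support hypothesis and bounds $(C_x^-)^2$ via $\lambda^2-\|X^TX-\lambda^2 I_n\|$), but the underlying logic is the same.
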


The hypothesis in the previous lemma is satisfied by standard distributions like Gaussians $\mathcal{N}(0,\frac{1}{d}I_d)$ for the inputs. The following corollary restates Theorem~\ref{thm:CV_hig_dim} for data that are generically distributed as in Lemma \ref{lem:hyp_data}, and when the dimension is large enough.

\begin{corollary}
    \label{cor:conv_high_dim}
    Let $\varepsilon>0$. Suppose Assumption \hyperref[assump:asymmetry]{1} and that $(x_i,y_i)_{1\leq i\leq n}$ are i.i.d. generated from a probability distribution satisfying the same properties as in Lemma \ref{lem:hyp_data}. There exists a constant $C>0$ depending only on the constants $C_{x,y}^{+,-}$ such that, if the network has $p\geq 4\log\left(\frac{6n}{\varepsilon}\right)\left(1+\left(C_{a,w}\frac{C_x^+}{C_y^+}\right)^2\right)$ neurons in dimension $d\geq C\left(n^2+n\log\left(\frac{1}{\varepsilon}\right)\right)$ with $C_{a,w}$ depending only on the join law of $a,w$, then, with probability at least $1-\varepsilon$ over the initialization of the network and the data generation, the loss converges to 0 at exponential speed of rate at least $\frac{1}{n}$.
\end{corollary}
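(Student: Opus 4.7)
My plan is to deduce the corollary from Theorem \ref{thm:CV_hig_dim} by a union bound, treating the data generation and the network initialization as independent sources of randomness. The strategy is: first condition on a high-probability event on which the data $(x_i,y_i)$ behaves well enough that (i) Assumption \ref{assump:data} holds and (ii) the random quantities $C_x^\pm, C_y^\pm$ are controlled by deterministic constants depending only on $\mathcal{P}_{X,Y}$; then apply Theorem \ref{thm:CV_hig_dim} to get the convergence statement with probability over the initialization.

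For the data side, I would proceed in three steps. First, invoke Lemma \ref{lem:hyp_data} with failure probability $\varepsilon/3$: provided $d \geq C_1 (n^2 + n\log(3/\varepsilon))$ for some constant $C_1$ depending on $C_{x,y}^{+,-}$, Assumption \ref{assump:data} holds with probability at least $1-\varepsilon/3$. Second, control $C_x^\pm$: since $\mathcal{P}_X$ is sub-Gaussian with covariance $\frac{\lambda^2}{d} I_d$, each $\|x_i\|^2$ concentrates around $\lambda^2$ by a standard Hanson--Wright/Bernstein inequality, so with probability $1-\varepsilon/3$ (absorbable into the condition $d \geq C (n^2 + n\log(1/\varepsilon))$ after possibly enlarging $C$) we have $\frac{\lambda}{2} \leq C_x^- \leq C_x^+ \leq 2\lambda$. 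Third, control $C_y^\pm$: since $\mathcal{P}_Y$ restricted to $\mathbb{R}^*$ has compact support, there exist deterministic constants $0 < c_-\leq c_+ < \infty$ such that whenever no $y_i$ equals $0$, we have $c_- \leq C_y^- \leq C_y^+ \leq c_+$; handling the possible atom at $0$ is a minor point (either the distribution is assumed atomless, or one conditions on the event that no $y_i$ vanishes, which either holds almost surely or is absorbed into a further $\varepsilon/3$ union-bound term).

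For the initialization side, on the event above the data satisfy all hypotheses of Theorem \ref{thm:CV_hig_dim}, with the constants $C_{x,y}^{+,-}$ replaced by their deterministic bounds. I then apply Theorem \ref{thm:CV_hig_dim} with failure probability $\varepsilon/3$: this requires $p \geq 4\log(3n/\varepsilon)$, which is implied by the stated hypothesis $p \geq 8\log(n/\varepsilon)$ for $n/\varepsilon$ large enough (and by adjusting the constant $8$ if needed in borderline cases). On the resulting event over the initialization, the loss converges to $0$ with $\langle \mu_\infty \rangle \geq C'/n$, where $C' = \frac{(C_x^-)^2}{C_x^+} C_y^- \geq \frac{\lambda \, c_-}{8}$ is bounded below by a constant depending only on $\mathcal{P}_{X,Y}$, which yields the advertised $1/n$ rate.

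Finally a union bound over the two failure events (data behavior and initialization) gives success probability at least $1-\varepsilon$, completing the proof. The main non-routine piece is verifying that the random constants $C_{x,y}^{+,-}$ appearing in Theorem \ref{thm:CV_hig_dim} admit simultaneous deterministic bounds on a high-probability data event, so that the statement transfers cleanly from the fixed-data setting of Theorem \ref{thm:CV_hig_dim} to the random-data setting of the corollary; everything else is a careful bookkeeping of the failure probabilities and of the dependence of $C$ on $C_{x,y}^{+,-}$.
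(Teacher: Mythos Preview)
Your proposal is correct and follows essentially the same approach as the paper: a union bound combining Lemma~\ref{lem:hyp_data} (data event) with Theorem~\ref{thm:CV_hig_dim} (initialization event), each at level $\varepsilon/2$ in the paper's version. The only minor difference is that the paper folds the control of $C_x^\pm$ directly into the concentration of $\|X^TX - \lambda^2 I_d\|$ (replacing $\|X^TX - D_X\|$ in the proof of Theorem~\ref{thm:CV_hig_dim}), whereas you handle it via a separate union-bound term; both routes are straightforward.
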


Beyond the high-dimensionality of the inputs, Corollary \ref{cor:conv_high_dim} does not require any initialization specificity (small or large), and the number of neurons required to converge can be as low as $\log(n)$. Hence, let us put emphasis on the fact that the global nice structure of the loss landscape comes from the high-dimensionality: this does not come from a specific region in which the network is initialized as in the NTK (or lazy) regime~\citep{chatterjee2022convergence}, nor rely on the infinite number of neurons~\citep{wojtowytsch2020convergence}.

Remark that, under the near-orthogonality assumption, in the large $d$ limit, the largest amount of data that ``fits'' in the vector space is only $d$, and corresponds to a perturbation of the canonical basis. On average, Corollary~\ref{cor:conv_high_dim} finally states that the average number of data points for which we can show convergence is of the order $\sqrt{d}$. Trying to push back this limit up to order $d$ is an important question for future research and seems to ask for other techniques. Experiments underlying this question are presented in Section~\ref{section:experiments} (Figure~\ref{fig:proba_CV}).

\subsection{Sketch of Proof}

The proof of convergence relies on three key points: \textbf{(i)} the loss strictly decreases as long as each example is activated by at least a neuron, \textbf{(ii)} for a data point, if there exists a neuron which is activated at initialization, then at least one neuron remains activated throughout the dynamics, \textbf{(iii)} At initialization, condition (ii) is satisfied with large probability. Let us detail shortly how each item articulates with one another.
\medbreak
\paragraph{(i).} First, Lemma \ref{lem:bounds_on_mu}, stated and proved in Appendix, shows that, by computing the derivatives of the loss, we get a lower bound on the curvature
\begin{equation}
    \mu(t) \geq \frac{2}{n}((C_x^-)^2 - ||X^TX-D_X||)\min_i\left\{\frac{1}{p}\sum_{j=1}^p|a_j|^21_{j,i}\right\}.
\end{equation}
To prove the strict positivity, one needs to show that $||X^TX-D_X||$ is small enough, and that for each data $i$, there exists $j$ such that $|a_j|^21_{j,i}$ is strictly positive. Thanks to the initialization of the weights, $|a_j|^2 \geq |a_j(0)|^2-||w_j(0)||^2>0$, and to Assumption \hyperref[assump:data]{2}, $\frac{1}{2\sqrt{2}}(C_x^-)^2 > ||X^TX-D_X||$. Thus, we have convergence if at any time, for any data input, one neuron remains active, i.e., formally, for all $t\geq0$, and all $i \in \llbracket 1, n \rrbracket$, there exists $j \in \llbracket 1 , p \rrbracket$ such that $\langle w_j(t)| x_i\rangle_+>0$. Hence, the loss decreases as long as one neuron remains active per data input. We see next how to show this crucial property.

\paragraph{(ii).} Let us fix the data index $i \in \llbracket 1 , n\rrbracket$, and $y_i>0$ without loss of generality. Let us define $j^*_i~=~\arg\max_{a_jy_i>0}\langle w_j(t)|x_i\rangle$ the index of the largest correctly initialized neuron. Since $a_j$ cannot change sign thanks to Assumption \hyperref[assump:asymmetry]{1}, $\langle w_{j_i^*}(t)|x_i\rangle$ is continuous, and has a derivative over each constant segment of $j_i^*$. The strict positivity of this neuron is an invariant of the dynamics: if $r_i \geq y_i$, the derivative of the neuron shows it increases, and if $r_i < y_i$, the residual has decreased, which implies that the $\langle w_{j_i^*}(t)|x_i\rangle$ is strictly positive. Thus, if a neuron is correctly initialized for the data point $i$, a neuron stays active throughout the dynamics. This invariant however requires a large but constant of $n$ number of neurons.

\paragraph{(iii).} Finally, Lemma \ref{lem:all_act_big} shows $\mathbb{P}(\forall i, \exists j, \langle w_j(0)|x_i\rangle >0\cap a_jy_i> 0) \geq 1-n\left(\frac{3}{4}\right)^p$, which implies that for $p\geq 4\log(\frac{n}{\varepsilon})$, the network is well initialized with probability at least $1-\varepsilon$.

\section{Orthogonal Data}
\label{section:orthogonal}

In this section, we go deeper on the study of the gradient flow, assuming that the input data are perfectly orthogonal, or equivalently that $||X^TX-D_X|| = 0$. Since most of the intuition for the convergence is drawn from the orthogonal case, it offers stronger results which we detail. In particular, we are able to closely understand the local-PL curvature $(\mu(t))_{t \geq 0}$ evolution and asymptotic behaviour.

\subsection{Asymptotic PL curvature}
Theorem~\ref{thm:CV_hig_dim} has shown that the local-PL curvature is lower bounded by a term of order $\frac{1}{n}$, allowing us to show an exponential convergence rate of this order. The following proposition shows that in the orthogonal case the curvature can also be upper bounded.

\begin{proposition}
    \label{prop:upper_bound_mu}
    Let $\varepsilon>0$. Given orthogonal inputs, and a constant $\Delta>0$ such that for all $j\in\llbracket1,p\rrbracket,\,|a_j(0)| - ||w_j(0)||^2>\Delta$, there exists $C>0$ depending only the constants $C_{x,y}^{+,-}$, $\Delta$, and on the law of $a,w$, such that for $d\geq C\log(p)\log(np)\log\left(\frac{1}{\varepsilon}\right)^2$, with probability $1-\varepsilon$ on the initialization of the network, we have an upper-bound on the local-PL curvature for all $t \geq Cn$,
    \begin{equation}
        \mu(t) \leq C\sqrt{\frac{p}{n}}\max_i\left|1 - \frac{r_i(t)}{y_i}\right| + \frac{C}{n}.\\ 
    \end{equation}
\end{proposition}

This upper bound uses two properties that are characteristic of the orthogonal case. First, once a neuron is inactive on some data input, then, it can never re-activate again. The second property is that for an initialization scale independent on $n$, there is a phase during which correctly initialized neurons increase while the others decrease to 0. This \textit{extinction phase}, proved in Lemma \ref{lem:extinction_in_finite_time}, is short in comparison to the time needed to fit the residuals, and leaves the system decoupled between positive and negative outputs $y_i$.

In the limit where $n$ goes to infinity, Proposition \ref{prop:upper_bound_mu} shows that the network does not learn since the local-PL is 0. This is an artifact of the orthogonality of the inputs: the interaction between inputs should accelerate the dynamics. However, although all quantities have well defined limits as $n\rightarrow+\infty$, the limits cannot be understood as a gradient descent in an infinite dimensional space\footnote{One would like to write the loss as an expectation over the data point, yet it is impossible as there is no uniform distribution on $\mathbb{N}$.}.

Proposition \ref{prop:upper_bound_mu} is in fact valid for $p$ fixed, and an initialization of the weights for which every data is correctly initialized by a neuron. In that case, Proposition \ref{prop:upper_bound_mu} shows that the asymptotic curvature cannot be larger than the order $\frac{1}{\sqrt{n}}$. While the local-PL curvature is between the order $\frac{1}{n}$ and $\frac{1}{\sqrt{n}}$, the next proposition shows that any intermediate order $\frac{1}{n^{\alpha}}$, for $\alpha\in[\frac{1}{2},1]$, can be reached asymptotically, with strictly positive probability, using a particular initialization of the network. 

\paragraph{Group initialization.} In the following, we use $p_n$ to denote the number of neurons, and partition the $n$ data points in $p_n$ groups of cardinality $k_n$ (note that $p_n k_n = n$). We re-index the examples per group as by $(x_i^j,y_i^j) = (x_{i+(j-1)k_n},y_{i+(j-1)k_n})$, for all $i \in \llbracket 1, k_n\rrbracket$ and $j \in \llbracket 1, p_n\rrbracket$. Moreover, we use a special initialization of the network such that for all $j,q \in \llbracket 1 , p_n\rrbracket $, $i \in \llbracket 1, k_n\rrbracket $,
\begin{equation}
    \bigg\{  
    \begin{matrix}
        \langle w_j | x_i^q\rangle >0\text{ if }j=q\\
        \langle w_j | x_i^q\rangle \leq 0\text{ if }j\neq q\\
    \end{matrix}
    \ \ \text{ and } \ \  a_j = s_j||w_j||~,
\end{equation}
i.e., $w_j$ is correctly activated on the group $j$ only. An example of group initialization is visible on Figure \ref{fig:group_initialization}.

\begin{figure}
    \centering
    \includegraphics[width=0.44\textwidth]{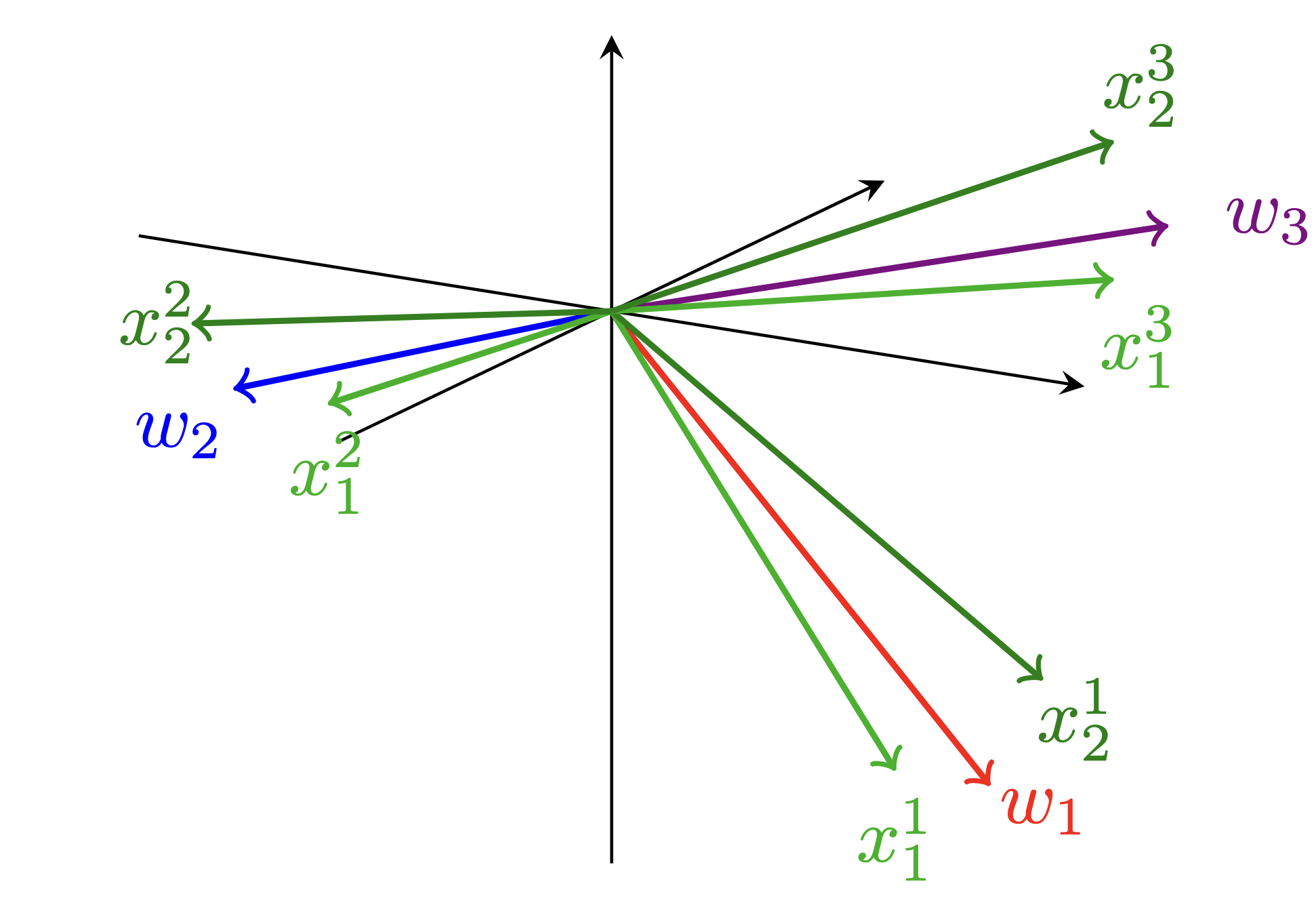}
    \caption{Example of group initialization with $p_n = 3$ neurons, $k_n=2$ examples per neurons, for $n=6$ total examples. Group initialization allows to treat each group independently from the other, an thus to solve the problem for a 1-neuron network.}
    \label{fig:group_initialization}
\end{figure}

\begin{proposition}
    \label{prop:group_examples}
    Suppose the group initialization described above, with orthonormal inputs, and the signs of all outputs of the group $j$ are equal to $s_j$.Suppose moreover that the initialization is symmetric, \textit{i.e.} $|a_j(0)| = ||w_j(0)||$. We fix $k_n = n^{2(1-\alpha)}$ with $\alpha\in[\frac{1}{2},1]$. Then, for $t\geq Cn^{3\alpha-1}\log\left(Cn\right)$, the local-PL curvature satisfies
    \begin{equation}
        \frac{K_1}{n^{\alpha}} \leq \mu(t) \leq \frac{K_2}{n^{\alpha}},
    \end{equation}
    where $C = \max\left(\alpha C_y^-,(\frac{1}{2C_y^-})^{\frac{1}{\alpha}}\right)$, $K_1 = 2C_y^-\min_j\frac{||w_j(0)||^2}{2+||w_j(0)||^2}$ and $K_2 = 4C_y^+$.
\end{proposition}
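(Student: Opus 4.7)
My plan is to exploit the group initialization together with the orthonormality of the inputs to reduce the gradient flow to $p_n$ independent one-neuron sub-dynamics, and then to compute the PL curvature $\mu(t)$ explicitly from this decoupled structure. The decoupling rests on the ``once inactive, always inactive'' property in the orthogonal case (already invoked in the proof of Proposition~\ref{prop:upper_bound_mu}): since $\langle w_j(0)|x_i^q\rangle\leq 0$ for $j\neq q$, the indicator $1_{j,i}$ stays supported on the single group of $i$ for all $t\geq 0$. Writing $z_i^j=\langle w_j|x_i^j\rangle$, the flow inside group $j$ reduces to
\begin{equation*}
\dot z_i^j=\frac{a_j}{n}r_i^j,\qquad \dot a_j=\frac{1}{n}\sum_{i=1}^{k_n}r_i^j z_i^j,\qquad r_i^j=y_i^j-\frac{a_j z_i^j}{p_n},
\end{equation*}
with the balanced identity $a_j^2=||w_j||^2$ preserved by Lemma~\ref{lem:homogeneity}, and the component of $w_j$ orthogonal to $\mathrm{span}\{x_i^j\}_i$ kept constant.

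Plugging the decoupled formulas for $\nabla L$ and $L$ into the definition of $\mu$ yields
\begin{equation*}
\mu(t)=\frac{2}{np_n}\cdot\frac{\sum_j\bigl[\bigl(\sum_i r_i^j z_i^j\bigr)^2+a_j^2\sum_i (r_i^j)^2\bigr]}{\sum_{j,i}(r_i^j)^2}.
\end{equation*}
For the upper bound, Cauchy--Schwarz together with the balanced identity $\sum_i(z_i^j)^2\leq ||w_j||^2=a_j^2$ gives $\bigl(\sum_i r_i^j z_i^j\bigr)^2\leq a_j^2\sum_i(r_i^j)^2$, hence $\mu(t)\leq \tfrac{4}{np_n}\max_j a_j(t)^2$. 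Since the interpolator satisfies $a_j^{\ast 4}=p_n^2\sum_i(y_i^j)^2+||w_j^\perp||^2 a_j^{\ast 2}$, one has $a_j^{\ast 2}=O(p_n\sqrt{k_n}\,C_y^+)=O(n^\alpha)$; combined with the same-sign intra-group condition to keep $a_j^2$ at or below its equilibrium, this yields $\mu(t)\leq K_2/n^\alpha$. For the lower bound, Lemma~\ref{lem:bounds_on_mu} specialized to orthonormal inputs with the group indicator reads $\mu(t)\geq \tfrac{2}{np_n}\min_j a_j(t)^2$, so it is enough to prove $\min_j a_j(t)^2\geq \tfrac{K_1}{2}n^\alpha$.

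The quantitative growth of $A_j(t):=a_j(t)^2$ from its $O(1)$ initial value up to a constant fraction of its $\Theta(n^\alpha)$ equilibrium is the heart of the argument, and the main obstacle. Differentiating and using the balanced identity yields
\begin{equation*}
\dot A_j=\frac{2 a_j}{n}\sum_i z_i^j r_i^j=\frac{2 a_j}{n}\biggl(\sum_i z_i^j y_i^j-\frac{a_j}{p_n}\sum_i(z_i^j)^2\biggr).
\end{equation*}
Using that all $y_i^j$ share the sign of $a_j$, that the $z_i^j$ stay strictly positive throughout the flow (by the argument in point (ii) of the sketch of Theorem~\ref{thm:CV_hig_dim}), and the balanced identity, one aims to reduce the right-hand side to a scalar logistic-type inequality $\dot A_j\geq c\,\tfrac{\sqrt{k_n}}{n}\,A_j\bigl(1-A_j/(C_y^+ p_n\sqrt{k_n})\bigr)$, up to corrections coming from the non-uniform distribution of the $z_i^j$ within each group. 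Controlling this non-uniformity using only group-aggregated quantities is the delicate step, and is the source of the explicit constants $K_1$ and $C$ in the statement; once achieved, the time for $A_j$ to exceed $\tfrac{K_1}{2}n^\alpha$ is at most $Cn^{3\alpha-1}\log(Cn)$, which combined with the bounds of the previous paragraph proves both the lower and the upper estimate on $\mu(t)$.
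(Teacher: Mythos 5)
Your overall strategy coincides with the paper's: use the group initialization plus orthogonality to decouple the flow into $p_n$ independent single-neuron subsystems, reduce each to a scalar evolution for $A_j=a_j^2=\|w_j\|^2$, and run a logistic-growth argument to show $A_j$ climbs from $O(1)$ to a constant fraction of its $\Theta(p_n\sqrt{k_n})=\Theta(n^\alpha)$ equilibrium within the stated time. However, the step you yourself label ``the delicate step'' is precisely the mathematical content of the proposition, and you have not supplied it. To obtain $\dot A_j\gtrsim \frac{\sqrt{k_n}}{n}A_j\bigl(1-A_j/(Cp_n\sqrt{k_n})\bigr)$ you need a \emph{time-uniform} lower bound on $\sum_i z_i^j y_i^j$ of the form $c\,\sqrt{k_n}\,\|D_j^n\|\,\|w_j\|_+$ with $c>0$ independent of $t$ and $n$, i.e.\ a lower bound on the cosine between the active part of $w_j$ and the signal direction $D_j^n=\frac{1}{k_n}\sum_i y_i^j x_i^j$. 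Cauchy--Schwarz only gives the upper bound; positivity of the $z_i^j$ and the same-sign condition do not by themselves prevent the mass of $w_j$ from sitting on the coordinates where $|y_i^j|$ is smallest, or from remaining in the component orthogonal to $\mathrm{span}\{x_i^j\}$, either of which would degrade the constant. Your upper bound has the same hole: the claim that $a_j^2$ stays ``at or below its equilibrium'' is a monotonicity statement about a non-autonomous scalar ODE whose drift depends on the unquantified alignment.

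The paper resolves exactly this point by introducing the alignment $u_j(t)=\langle\bar D_j^n\,|\,s_j\bar w_j^{+}(t)\rangle$ and showing that, thanks to the decoupling, it satisfies the autonomous Riccati equation $\dot u_j=c_n^j(1-u_j^2)$ with $c_n^j=\Theta(\sqrt{k_n}/n)$, solvable in closed form via hyperbolic functions. Since $u_j(0)\ge 0$ by the group initialization, $u_j$ is nondecreasing, and substituting the explicit $u_j$ into the norm equation $\frac{d}{dt}\|w_j\|_+^2=\frac{2}{n}\|w_j\|_+^2\bigl(k_n\|D_j^n\|u_j-\frac{1}{p_n}\|w_j\|_+^2\bigr)$ yields a fully explicit formula for $\|w_j(t)\|_+^2$. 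Monotone convergence to the $\Theta(n^\alpha)$ plateau (your upper bound), the threshold time $\frac{1}{c_n^j}\log(p_n\sqrt{k_n}\|D_j^n\|)$, and the explicit constants $K_1,K_2,C$ all drop out of that formula. Without this alignment control (or an equivalent substitute), your logistic inequality is not established, so neither bound on $\mu(t)$ is proved.
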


Proposition \ref{prop:group_examples} states that any asymptotic value $\langle \mu_{\infty}\rangle\in[\frac{K_1}{n}, \frac{K_2}{\sqrt{n}}]$ can be achieved with strictly positive probability using group initialization. But of what order is the most likely limit of the curvature for standard initialization? The experiment in Section~\ref{exp2} suggest that, with high probability, the asymptotic curvature is always of the order $\frac{1}{\sqrt{n}}$.

\begin{conjecture}
    \label{conjecture}
    Let $\varepsilon >0$. There exist $C_1,C_2>0$ depending only on the data and the initialization, such that for $p\geq C_1\log(\frac{n}{\varepsilon})$ and for orthogonal examples, with probability at least $1-\varepsilon$ over the initialization of the network, we have convergence of the loss to 0 and
    \begin{equation}
        \langle\mu_{\infty}\rangle = \frac{C_2}{\sqrt{n}}.
    \end{equation}
\end{conjecture}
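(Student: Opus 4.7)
The plan is to establish matching upper and lower bounds on $\langle\mu_\infty\rangle$, both of order $1/\sqrt{n}$, with high probability over the random initialization. The upper direction is almost immediate from Proposition \ref{prop:upper_bound_mu}: Theorem \ref{thm:CV_hig_dim} ensures $r_i(t) \to 0$, so $\max_i|1-r_i(t)/y_i| \to 1$ and $\mu(t) \leq C\sqrt{p/n} + C/n$ in the limit. With $p = \Theta(\log(n/\varepsilon))$, this already gives the conjectured order up to a logarithmic factor; to obtain the sharp constant $C_2/\sqrt{n}$ I would revisit the proof of Proposition \ref{prop:upper_bound_mu} and replace a worst-case dependence on $p$ by a concentration argument under the random initialization, exploiting the fact that the effective number of neurons active per data point concentrates tightly.

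The lower bound is the main work, and the plan is to exploit the decoupling induced by orthogonality. For each $j$, let $S_j(t) = \{i : 1_{j,i}(t) = 1\}$ denote the activation set of neuron $j$. In the orthogonal regime activations can only switch off (a property underlying Proposition \ref{prop:upper_bound_mu}), so the limit $S_j(\infty)$ is well-defined. The key step is to show that, with probability at least $1-\varepsilon$, $|S_j(\infty)| = \Theta(n)$ for every $j$. This should hold because initially $|S_j(0)|$ is of order $n/2$ by the rotational invariance of the initialization, and during the extinction phase (Lemma \ref{lem:extinction_in_finite_time}) only the incorrectly-signed neurons die, while the correctly-signed ones retain a positive fraction of their initial active data. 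With $|S_j(\infty)|$ controlled, one can apply a decoupling argument in the spirit of Proposition \ref{prop:group_examples}: each neuron $j$ effectively fits $|S_j(\infty)| = \Theta(n)$ orthogonal residuals at an intrinsic rate $\sqrt{|S_j(\infty)|}/n = \Theta(1/\sqrt{n})$, which transfers to a lower bound on $\mu(t)$ of the same order.

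The principal obstacle is to rigorously control $|S_j(\infty)|$ under random initialization. Unlike the deterministic group initialization of Proposition \ref{prop:group_examples}, where each neuron operates independently on its group, here all neurons couple through the common residuals $(r_i)_{i}$, and one must show that this coupling does not cause a correctly-initialized neuron to lose the majority of its active data points. Concretely, this requires lower bounds on $w_j(t)^T x_i$ that persist throughout the coupled dynamics for every $i$ with $a_j y_i > 0$ and $\langle w_j(0) | x_i \rangle > 0$. A secondary difficulty is that the PL lower bound of Theorem \ref{thm:CV_hig_dim} only yields $C/n$ because it argues worst case per data point; matching the sharp $1/\sqrt{n}$ rate likely requires working directly with the spectrum of the time-dependent matrix governing the residual dynamics in the orthogonal case, rather than with pointwise magnitude bounds.
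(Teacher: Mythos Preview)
The statement you are attempting to prove is explicitly a \emph{conjecture} in the paper: the authors do not prove it, and only support it empirically (Section~\ref{exp2} and Figure~\ref{fig:CV_speed}). There is therefore no ``paper's own proof'' to compare your proposal against. That said, let me comment on the proposal as a proof attempt in its own right.

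Your upper-bound argument is essentially correct up to the logarithmic factor you flag; removing that factor via a concentration refinement of Proposition~\ref{prop:upper_bound_mu} is plausible but would itself require new work, since the $\sqrt{p}$ in the bound comes from a worst-case ordering argument (the indices $N(j,i)$ in the proof), not from a simple union bound.

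The lower-bound plan has a genuine gap. The decoupling in Proposition~\ref{prop:group_examples} relies essentially on the activation sets being \emph{disjoint}: each neuron~$j$ sees only its own group of residuals, so its dynamics can be solved in closed form (Equation~\eqref{norm}) independently of the other neurons. Under random initialization the sets $S_j(\infty)$ overlap heavily---a given data point is typically served by $\Theta(p)$ neurons after the extinction phase---and the residuals couple all neurons together. Your sentence ``each neuron $j$ effectively fits $|S_j(\infty)|$ orthogonal residuals at an intrinsic rate $\sqrt{|S_j(\infty)|}/n$'' is exactly the step that is not justified once disjointness is lost: knowing $|S_j(\infty)|=\Theta(n)$ does not by itself force $\|w_j\|^2$ to grow to order $\sqrt{n}$, because several neurons may share the load on the same data points, each growing only to a smaller magnitude. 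What is actually needed for the lower bound in Lemma~\ref{lem:bounds_on_mu} to yield $1/\sqrt{n}$ is that $\min_i \tfrac{1}{p}\sum_j |a_j|^2 1_{j,i}$ be of order $\sqrt{n}$, and establishing this in the coupled regime is precisely the open content of the conjecture. You correctly name this as the ``principal obstacle,'' but the proposal offers no mechanism to overcome it; as written it is closer to a restatement of the difficulty than to a proof strategy.
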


\subsection{Phase transition in the PL curvature}

In the previous section, we emphasized the asymptotic order of the local-PL curvature with respect to $n$ and hypothesized that it is of the order $\frac{1}{\sqrt{n}}$ in most cases. In this section, we are interested in the evolution of the local-PL curvature during the dynamics. Lemma \ref{lem:speed_at_init} below computes the local-PL curvature at initialization in the large $p$ regime, and shows that initially it is of order $\frac{1}{n}$.

\begin{lemma}
    \label{lem:speed_at_init}
    At initialization, the local-PL curvature $\mu(0)$ is a random variable which satisfy $\sqrt{p}(n\mu(0)-\beta_0)\underset{p\rightarrow+\infty}{\longrightarrow}\mathcal{N}\left(0,\gamma_0^2\right)$, and with $\beta_0,\gamma_0$ depending only on the data and the distributions of the network's neurons.
\end{lemma}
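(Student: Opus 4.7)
The plan is to view $\frac{2}{n}\mu(0)$ as a smooth function of empirical averages over the $p$ i.i.d.\ neurons $(a_j, w_j)$, and then invoke the multivariate central limit theorem together with the delta method.

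First I would expand $\mu(0)$ explicitly. Direct computation gives $\nabla_{w_j} L(\theta_0) = -\frac{a_j}{pn}\sum_i r_i\,1_{j,i}\,x_i$ and $\nabla_{a_j} L(\theta_0) = -\frac{1}{pn}\sum_i r_i\langle w_j|x_i\rangle_+$, with $r_i = y_i - \frac{1}{p}\sum_j a_j\langle w_j|x_i\rangle_+$. Substituting into $\mu(0) = p\|\nabla L(\theta_0)\|^2/L(\theta_0)$ yields
\begin{equation*}
\frac{2}{n}\mu(0) \;=\; \frac{4}{n^{2}}\cdot\frac{\bar F_p(\mathbf r)}{\sum_{i=1}^n r_i^{2}},\qquad \bar F_p(\mathbf r) := \frac{1}{p}\sum_{j=1}^p F_j(\mathbf r),
\end{equation*}
where $F_j(\mathbf r) = a_j^{2}\bigl\|\sum_i r_i\,1_{j,i}\,x_i\bigr\|^{2} + \bigl(\sum_i r_i\langle w_j|x_i\rangle_+\bigr)^{2}$. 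Both numerator and denominator are functions of empirical means over $j$, but they are coupled through the residual vector $\mathbf r$.

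Next I would decouple this dependence. Write $r_i = y_i - \bar G_i^{(p)}$ with $\bar G_i^{(p)} := \frac{1}{p}\sum_j a_j\langle w_j|x_i\rangle_+$. Centeredness of the joint law of $(a_j, w_j)$ together with the rotation invariance of $w_j$ gives $\E[a_j\langle w_j|x_i\rangle_+] = 0$, so $\bar G_i^{(p)} = O_{\Pr}(1/\sqrt p)$ with an asymptotically Gaussian fluctuation. A first-order Taylor expansion,
\begin{equation*}
\bar F_p(\mathbf r) \;=\; \bar F_p(\mathbf y) \;+\; \bigl\langle \nabla_{\mathbf r}\bar F_p(\mathbf y),\,\mathbf r-\mathbf y\bigr\rangle \;+\; O_{\Pr}\bigl(\|\mathbf r-\mathbf y\|^{2}\bigr),
\end{equation*}
reduces the leading order to the genuine empirical mean $\bar F_p(\mathbf y) = \frac{1}{p}\sum_j F_j(\mathbf y)$ of the i.i.d.\ variables $F_j(\mathbf y)$ depending only on $(a_j, w_j)$, while $\sum_i r_i^{2} = \sum_i y_i^{2} + O_{\Pr}(1/\sqrt p)$. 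The quadratic remainder is $O_{\Pr}(1/p) = o_{\Pr}(1/\sqrt p)$ under the finite-moment hypotheses.

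The rest is standard. The $(n+1)$-dimensional vector $V_p := \bigl(\bar F_p(\mathbf y),\,\bar G_1^{(p)},\dots,\bar G_n^{(p)}\bigr)$ is an empirical mean of i.i.d.\ $\R^{n+1}$-valued random vectors with finite second moments, so the multivariate CLT gives $\sqrt p\,(V_p - \mu_V)\to\mathcal N(0,\Sigma_V)$ with $\mu_V = (m_F, 0,\dots,0)$ and $m_F := \E[F_1(\mathbf y)]$. Applying the delta method to the smooth map $(u, v_1,\dots,v_n)\mapsto \frac{4}{n^2}\cdot u/\sum_i(y_i-v_i)^2$, whose value at $\mu_V$ is $\beta_0 := 4 m_F/(n^2\sum_i y_i^2)$ and whose Jacobian is explicit, yields the announced CLT with $\gamma_0^2$ the image variance; both $\beta_0$ and $\gamma_0^2$ depend only on the data and the law of $(a_1, w_1)$. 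The main obstacle is precisely the coupling between $\bar F_p$ and $\mathbf r$: the Taylor expansion above is the only technical content, and controlling its quadratic remainder is what requires the finite-moment/sub-Gaussian assumptions already used in the paper; once handled, the rest is textbook CLT plus delta method.
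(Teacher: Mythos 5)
Your proposal follows essentially the same route as the paper: write $\mu(0)$ as an empirical average over the $p$ i.i.d.\ neurons, observe that the only coupling between summands is through the residual $R(0)$, which is itself an empirical average concentrating at rate $1/\sqrt p$, linearize in $R(0)$, and conclude by LLN for the mean $\beta_0$ and CLT for the $1/\sqrt p$ fluctuation. The paper carries out this expansion by hand (splitting $R(0)=\tilde Y-\xi_p/\sqrt p$ and tracking the cross terms), whereas your multivariate-CLT-plus-delta-method packaging is cleaner and, unlike the paper's write-up, explicitly accounts for the fluctuation of the denominator $\sum_i r_i^2$; just make sure the delta-method map also carries the linear correction $\langle\E[\nabla_{\mathbf r}F_1(\tilde{\mathbf y})],\mathbf r-\tilde{\mathbf y}\rangle$ to the numerator that your Taylor expansion produces, rather than only the ratio $u/\sum_i(y_i-v_i)^2$. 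One claim is not justified by the paper's hypotheses: $\E[a_j\langle w_j|x_i\rangle_+]=0$ does not follow from centered marginals and rotation-invariant $w_j$, because $a_j$ and $w_j$ are drawn from a joint (possibly dependent) law; this is precisely why the paper centers the residual at $\tilde Y=Y-\E[aw^TXP]$ rather than at $Y$. The fix is immediate in your framework --- center $\bar G^{(p)}$ at its true mean and expand around $\tilde{\mathbf y}$ --- so this does not affect the validity of the approach.
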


The constant $\beta_0$ is strictly positive as soon as the limit network does not directly equal the labels, which is natural to assume since they are unknown a priori. Thus the exponential rate of decrease of the loss in the early times of the dynamics is of order $\frac{1}{n}$. Importantly, Proposition \ref{prop:group_examples} with a single group has an asymptotic speed of order $\frac{1}{\sqrt{n}}$, meaning that the local-PL curvature transitions between $\frac{1}{n}$ and $\frac{1}{\sqrt{n}}$. If Conjecture \ref{conjecture} is true, then this phenomenon happens with high probability during the dynamics.

Let us study this phenomenon through the example of Proposition \ref{prop:group_examples}, with a fixed number of neurons $p$. In this case, the following theorem shows that there are exactly $p$ phase transitions of the loss, which each corresponds to a data group being fitted. To be precise, let us define $L_{\infty}(t) =\lim_{n\rightarrow+\infty}L_n(t)$, with $L_n(t) = L(\theta(t\times{t_n}))$, $t_n = \frac{\sqrt{np}}{4}\log(np)$, and $p$ fixed ($k_n=\frac{n}{p}$). We prove that $L_{\infty}$ is constant by parts with at most $p$ parts.

\begin{figure}[t!]
    \centering
    \includegraphics[width=0.65\linewidth]{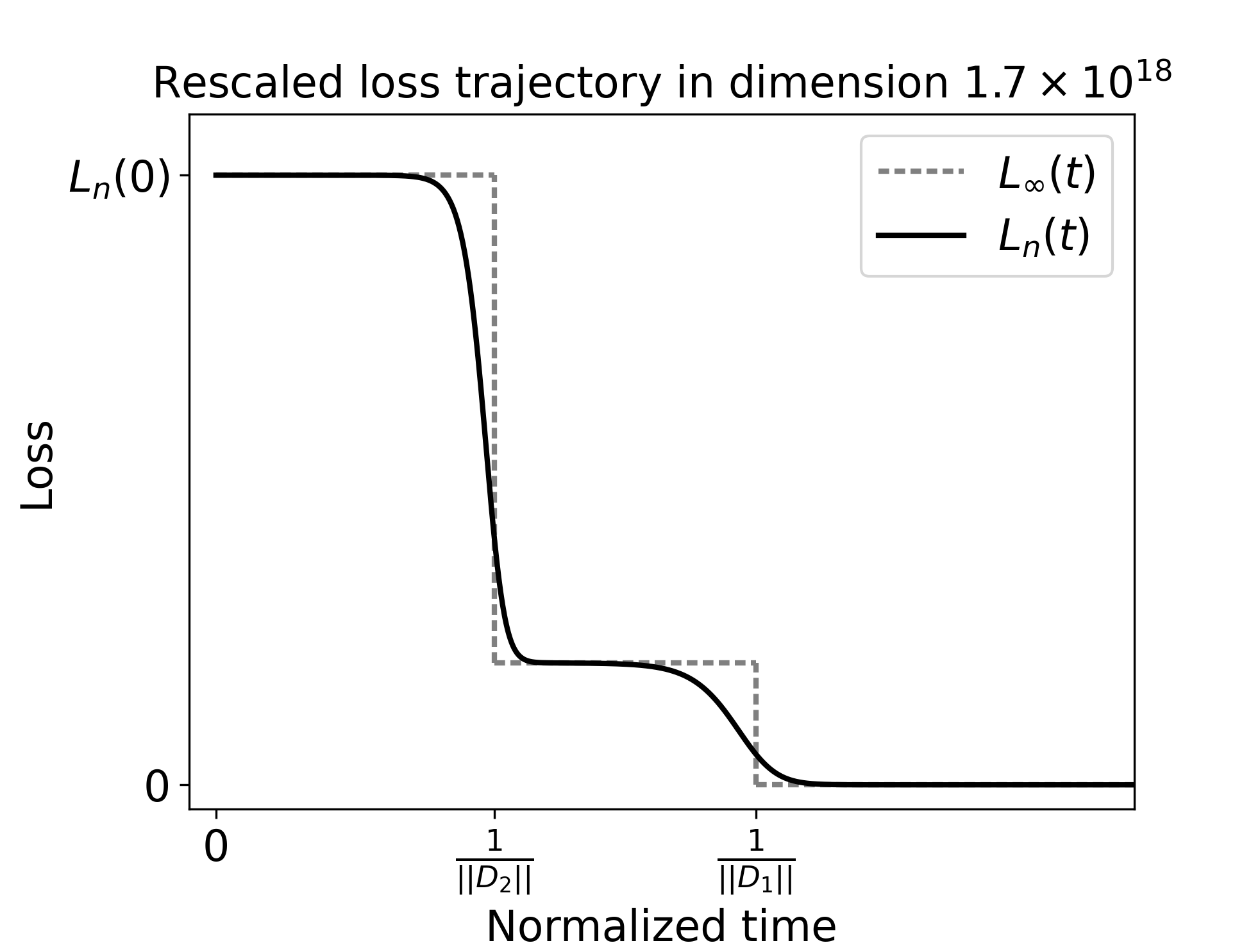}
    \caption{Simulation of the loss trajectory of a network with 2 neurons and group initialization, each activated separately on half the data points. $L_{n}$ is the rescaled loss for $n$ examples, and $L_{\infty}$ is its limit as $n$ goes to infinity. We can see two phase transitions in the very high-dimensional regime.}
    \label{fig:phase_transition}
\end{figure}

\begin{theorem}
    \label{thm:phase_transition}
    Suppose the same data hypothesis and initialization as Proposition \ref{prop:group_examples}. We define $||D_j^n||^2 = \frac{1}{k_n}\sum_{i=1}^{k_n}(y_i^j)^2$ for each cluster, and suppose its limit $||D_j^{\infty}||^2$ finite. Then, the function $L_{\infty}$ is constant by parts with at most $p$ parts, and the transitions happen at each time $t^j = \frac{1}{||D_j^{\infty}||}$. Moreover, for all $\varepsilon\in]0,1[$, there exist times $t_n^j({\varepsilon})$ satisfying
    \begin{equation}
        L^j(t_n^j({\varepsilon})) =\frac{\varepsilon}{2} ||D_j^n||^2 ,\,\,\frac{t_n^j({\varepsilon})}{t_n^j(1-{\varepsilon})} \sim_n 1 \text{  and  }
        \frac{t_n^j(1-{\varepsilon}) - t_n^j({\varepsilon})}{t_n} \sim_n \frac{1}{2||D_j^{\infty}||}\frac{\log\left(C^j(\varepsilon)\right)}{\log(n)},
    \end{equation}
    where $L^j$ is the part of the loss corresponding to the group $j$, and $C^j(\varepsilon)>1$ depends on $\varepsilon$ and the initializations and data of the group $j$.
\end{theorem}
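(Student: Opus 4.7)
My plan is to exploit the orthogonality of the inputs and the group initialization to reduce the full gradient flow to $p$ independent two-variable systems, each of which can be solved explicitly.

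\textbf{Step 1 (Decoupling).} In the orthogonal case a neuron inactive on some input at $t=0$ stays inactive for all $t\geq 0$ (a property already invoked in the proof of Proposition~\ref{prop:upper_bound_mu}); combined with the group initialization this forces $w_j$ to remain activated exactly on the $k_n$ points of group $j$, with its gradient confined to $\mathrm{span}\{x_i^j\}_i$. Hence $L(\theta(t)) = \sum_{j=1}^p L^j(t)$ and the $p$ subsystems evolve independently.

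\textbf{Step 2 (Spherical reduction).} Within group $j$ set $u_i = \langle w_j,x_i^j\rangle$, $s = |a_j| = \|w_j\|$ (Lemma~\ref{lem:homogeneity} and the balanced initialization), and $\eta = u/s$. A direct computation from \eqref{eq:gradient_flow} gives the decoupled pair
\begin{equation*}
\dot\eta = \tfrac{1}{n}(I-\eta\eta^T)\,y^j, \qquad \dot s = \tfrac{s}{n}\bigl(\langle\eta,y^j\rangle - s^2/p\bigr),
\end{equation*}
where $y^j = (y_1^j,\dots,y_{k_n}^j)$. Letting $c = \langle\eta,y^j\rangle/\|y^j\|$, one obtains the scalar ODE $\dot c = \tfrac{\|y^j\|}{n}(1-c^2)$, solved by $c(t) = \tanh\bigl(\|y^j\|t/n + \mathrm{arctanh}(c_0)\bigr)$: alignment up to error $\delta$ is achieved in time $O\bigl((n/\|y^j\|)\log(1/\delta)\bigr) = O(t_n/\log(np))$, which is negligible on the scale $t_n$.

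\textbf{Step 3 (Logistic growth and level crossings).} Once $\langle\eta,y^j\rangle \approx \|y^j\|$, the quantity $T := s^2$ obeys the logistic ODE
\begin{equation*}
\dot T = \tfrac{2\|y^j\|}{n}\,T\bigl(1 - T/T^*\bigr), \qquad T^* = p\|y^j\| = \sqrt{np}\,\|D_j^n\|,
\end{equation*}
with explicit solution $T(t) = T^*/(1 + (T^*/T_0 - 1)\ee^{-ct})$ and $c = 2\|y^j\|/n$. In this regime the residuals are $r_i^j = (1-T/T^*)y_i^j$, so $L^j(t) = \tfrac{1}{2}(1-T/T^*)^2\|D_j^n\|^2$, and imposing $L^j(t_n^j(\varepsilon)) = \tfrac{\varepsilon}{2}\|D_j^n\|^2$ forces $T/T^* = 1-\sqrt\varepsilon$, yielding $t_n^j(\varepsilon) = \tfrac{1}{c}\bigl[\log(T^*/T_0) + \log\tfrac{1-\sqrt\varepsilon}{\sqrt\varepsilon}\bigr] + o(t_n)$. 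Substituting $1/c = \sqrt{np}/(2\|D_j^n\|)$ and $\log(T^*/T_0) = \tfrac{1}{2}\log(np) + O(1)$, division by $t_n = \sqrt{np}\log(np)/4$ gives $t_n^j(\varepsilon)/t_n \to 1/\|D_j^\infty\| = t^j$ (hence $t_n^j(\varepsilon)/t_n^j(1-\varepsilon) \to 1$) and $(t_n^j(1-\varepsilon) - t_n^j(\varepsilon))/t_n \sim \log C^j(\varepsilon)/(2\|D_j^\infty\|\log n)$, with $C^j(\varepsilon)>1$ the ratio of the two $\varepsilon$-dependent logarithmic contributions. Since each $L^j$ transitions from $\|D_j^\infty\|^2/2$ to $0$ in a rescaled window of width $O(1/\log n)$ around $t^j$, $L_\infty$ is piecewise constant with at most $p$ plateaus.

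\textbf{Main obstacle.} The delicate step is the quantitative control of the angular dynamics: one must show that $\eta$ aligns up to relative error $O(1/\log n)$ \emph{before} the logistic transition begins, uniformly in $n$ and across the $p$ groups, and then verify that the small residual misalignment does not perturb either the logistic solution or the level-crossing times to leading order. The rest of the argument is careful algebra on the explicit solutions.
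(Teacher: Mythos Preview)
Your decomposition and the overall strategy---decouple into $p$ independent systems, separate an angular variable from a radial one, solve the angular equation in closed form, and then read off level crossings for the radial one---is exactly what the paper does. The computations in Step~3 are correct and yield the right asymptotics $t_n^j(\varepsilon)/t_n\to 1/\|D_j^\infty\|$ and the $1/\log n$ window.

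The one methodological difference is that you set up a \emph{two-phase approximation} (align first, then run a pure logistic), and accordingly flag as the ``main obstacle'' the need to show alignment completes before the logistic transition. The paper sidesteps this entirely: the alignment ODE $\dot c=\frac{\|y^j\|}{n}(1-c^2)$ decouples and is solved exactly, and then the norm equation $\dot T=\frac{2T}{n}\bigl(\|y^j\|\,c(t)-T/p\bigr)$ is integrated \emph{with the exact $c(t)$ in place}, not its asymptote. This is a linear first-order ODE in $1/T$ (or equivalently a Bernoulli equation), so one obtains a closed-form expression for $T(t)$ valid for all $t$, reproduced here as equation~(\ref{norm}). Evaluating that exact formula at the ansatz times $t_n^j(\kappa)=\frac{1}{c_n^j}\log(\kappa\, p\sqrt{k_n}\|D_j^n\|)$ and taking $n\to\infty$ then gives the result directly, with no approximation to justify.

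A small consequence of your shortcut: your pure-logistic level crossing produces the constant $\log\frac{(1-\sqrt\varepsilon)\sqrt{1-\varepsilon}}{\sqrt\varepsilon\,(1-\sqrt{1-\varepsilon})}$, whereas the exact solution yields a $C^j(\varepsilon)$ that also depends on $\|D_j^\infty\|$ (because the radial trajectory is not purely logistic when $c(t)<1$). Both satisfy the theorem as stated, but the constants differ. Also note that $s=|a_j|$ equals $\|w_j\|$, not the norm of the projection onto $\mathrm{span}\{x_i^j\}$; since the orthogonal component of $w_j$ is frozen, this only shifts $T$ by an additive constant and does not affect the asymptotics, but the paper's use of $\|w_j\|_+$ handles it cleanly.
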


The theorem shows that each transition of $L_n$ occurs in the time frame which decreases as $\frac{1}{\log(n)}$. Note that these transitions are subtle: one needs extremely large dimensions in order to differentiate two close transitions as shown on Figure \ref{fig:phase_transition}. The phase transitions of the loss are in fact associated with transitions of $||w_j||^2$ from a constant order to an order $\sqrt{n}$, and by Lemma \ref{lem:bounds_on_mu} with transitions on the local-PL from order $\frac{1}{n}$ to order $\frac{1}{\sqrt{n}}$.

\section{Experiments}
\label{section:experiments}

In this Section, we aim to perform deeper experimental investigations on the system, which we could not do formally. Precisely, we want to answer two questions:

\begin{enumerate}
    \item What is the probability that the loss reaches 0 for $n$ data points in dimension $d$, under the distributional hypotheses of Lemma \ref{lem:hyp_data} (sub-Gaussian, zero-mean and whitened data)? What is the maximum $n$ for a fixed $d$ such that global convergence holds with high probability ?
    \item In the orthogonal case, is the asymptotic exponential convergence rate of order $\frac{1}{\sqrt{n}}$ (on average over the initialization) as stated in Conjecture~\ref{conjecture}?
\end{enumerate}

The data and weights distribution which have been used for the experiments below can be found in Appendix \ref{app:experiments}, and the code is available on \href{https://github.com/leodana2000/Convergence-High-Dimension}{GitHub}.

\subsection{Probability of Convergence}
\label{exp1}

This section aims to test the limit in which Corollary \ref{cor:conv_high_dim} holds when the number of data points increase. Intuitively, as the number of examples $n$ grows, the neural network becomes less and less overparametrized, and hence is expected to fail to globally converge. Knowing if and when this occurs with high probability is important for us to understand how much our current threshold $C\sqrt{d}$ can be improved. We thus plot the probability of convergence, as well as the loss at convergence to obtain additional information when the probability is zero. We train $500$ one-layer neural networks with the normalization presented in Section~\ref{section:setup}, dimension $d=100$, $n$ ranging from $2500$ to $3500$, and $p_n = C\log(n)$ neurons. Additional details on the training procedure can be found in Appendix \ref{app:experiments}.

\begin{figure}
    \centering
    \includegraphics[width=1\linewidth]{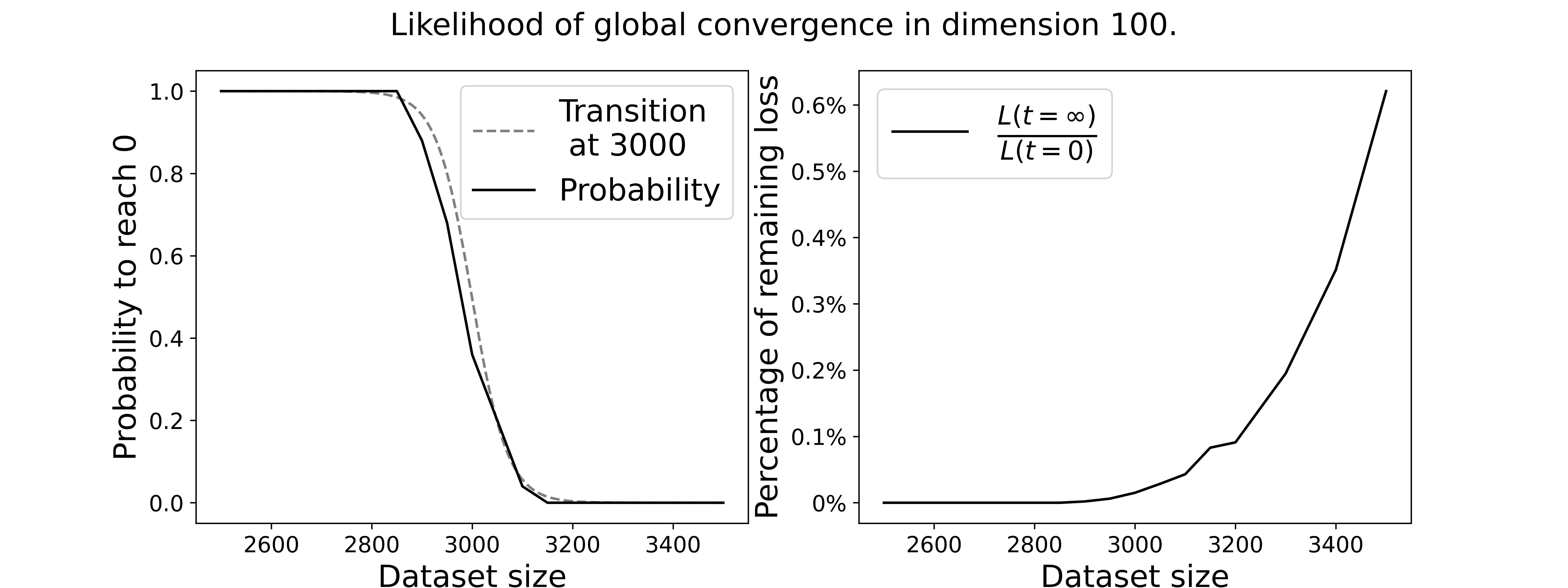}
    \caption{\textbf{Left:} Probability that a network trained on $n$ data converges to 0 loss. We observe a transition at $n=3000$, from likely to unlikely convergence.\\
    \textbf{Right:} Loss at convergence normalized by the loss at initialization. For $n\geq 3000$, the loss increases to $0.6\%$, which is equivalent to fitting all but one example.}
    \label{fig:proba_CV}
\end{figure}

Figure \ref{fig:proba_CV} shows that for $n\leq 2900$, the probability of convergence is very likely, for $n\geq 3100$ the probability is almost zero, and in between, there is a sharp transition. This sharp transition is visible for any value $d$ at some point $N(d,p)$, which we name the \textbf{convergence threshold}. By measuring the point for different values of $d$ and $p$, we see that the threshold scales like $N(d,p) \simeq C(p)d$, with $C(p)$ which is sub-linear (see Figure \ref{fig:proba_p} in Appendix \ref{app:experiments}). In particular, for $n\leq Cdp$, there exists a network that interpolates the data, meaning that the convergence threshold is not a threshold for the existence of a global minimum. The threshold's scaling is linear in $d$ which implies that proving convergence for $Cd$ data in dimension $d$ seems feasible.

\subsection{Empirical asymptotic local-PL curvature}
\label{exp2}

In this section we test Conjecture \ref{conjecture}, and to do so we measure $\mu(t)$ during the dynamics, and mostly at the end of the dynamics, since we know by Lemma \ref{lem:speed_at_init} that near 0 the local-PL curvature is of order $\frac{1}{n}$. To provide the strongest evidence for the conjecture, we measured the order of the local-PL curvature in three ways: by directly measuring the local-PL $\mu(t_{\infty}) = \log\left(\frac{L(t_{\infty}-1)}{L(t_{\infty})}\right)$ at the last epoch $t_{\infty}$, by measuring the average-PL curvature $\langle \mu_{\infty}\rangle = \frac{1}{t_{\infty}}\log\left(\frac{L(0)}{L(t_{\infty})}\right)$, and finally by mesuring the lower and upper bounds on the local-PL given in Lemma \ref{lem:bounds_on_mu}.

Following Conjecture \ref{conjecture}, all approximations should likely be decreasing in $\frac{1}{\sqrt{n}}$ as $n$ increases. To show this, we plot the log-log graph of each measure above. We train $500$ networks in dimension $d=2000$, with $n$ ranging from $1000$ to $2000$, and $p_n=C\log(n)$. All resulting plots appear linear in the log-log scale, with a slope close to $-\frac{1}{2}$ (see Figure \ref{fig:CV_speed} in Appendix \ref{app:experiments}), meaning that the scalings are indeed in $\frac{C}{\sqrt{n}}$. This empirically confirms our conjecture that the local-PL curvature has order $\frac{1}{\sqrt{n}}$ asymptotically.

\section{Conclusion}
We have studied the convergence of the gradient flow on one-hidden-layer ReLU networks with finite datasets. Our analysis leverages a local Polyak-Łojasiewicz viewpoint on the gradient-flow dynamics, revealing that for a large dimension $d$ in the order of $n^2$ data points, we can \textbf{guarantee global convergence with high probability} using only $\log(n)$ neurons. The specificity of the system relies on the low-correlation between the input data due to the high dimension. Moreover, in the orthogonal setting the \textbf{loss's exponential rate of convergence} is at least of order $\frac{1}{n}$ and at most of order $\frac{1}{\sqrt{n}}$, which is also the average asymptotic order as experimentally verified. For a special initialization of the network, a \textbf{phase transition in this rate} occurs during the dynamics.
\paragraph{Future Directions.} We are most enthusiastic about proving the convergence of the networks for linear threshold $d\geq Cn$, which should require new proof techniques, as well as quantifying the impact of large amounts of neurons on the system, which has been overlooked in our study. Future work should also consider using a teacher-network to generate the outputs, in order to link the probability or interpolation with the complexity, in terms of neurons, of the teacher.

\section*{Acknowledgements}
This work has received support from the French government, managed by the National Research Agency, under the France 2030 program with the reference "PR[AI]RIE-PSAI" (ANR-23-IACL-0008).

\bibliography{refs}
\clearpage
\section*{NeurIPS Paper Checklist} 

\begin{enumerate}

\item {\bf Claims}
    \item[] Question: Do the main claims made in the abstract and introduction accurately reflect the paper's contributions and scope?
    \item[] Answer: \answerYes{} 
    \item[] Justification: We have shown with a theorem and experiments the convergence of the neural network setting we considered.
    \item[] Guidelines:

\item {\bf Limitations}
    \item[] Question: Does the paper discuss the limitations of the work performed by the authors?
    \item[] Answer: \answerYes{} 
    \item[] Justification: Limitations : it is not a true deep neural network, discretization has to be carried out, and weaker assumption on the data could be made as said in the conclusion.

\item {\bf Theory assumptions and proofs}
    \item[] Question: For each theoretical result, does the paper provide the full set of assumptions and a complete (and correct) proof?
    \item[] Answer: \answerYes{} 
    \item[] Justification: Simply refer to the appendix and theorems statements.

    \item {\bf Experimental result reproducibility}
    \item[] Question: Does the paper fully disclose all the information needed to reproduce the main experimental results of the paper to the extent that it affects the main claims and/or conclusions of the paper (regardless of whether the code and data are provided or not)?
    \item[] Answer: \answerYes{} 
    \item[] Justification: In the experiment section, every thing is stated to reproduce them.

\item {\bf Open access to data and code}
    \item[] Question: Does the paper provide open access to the data and code, with sufficient instructions to faithfully reproduce the main experimental results, as described in supplemental material?
    \item[] Answer: \answerYes{} 
    \item[] Justification: See link.
    
\item {\bf Experimental setting/details}
    \item[] Question: Does the paper specify all the training and test details (e.g., data splits, hyperparameters, how they were chosen, type of optimizer, etc.) necessary to understand the results?
    \item[] Answer: \answerYes{} 
    \item[] Justification: Yes, in the link to the code and the details of the experimental section.

\item {\bf Experiment statistical significance}
    \item[] Question: Does the paper report error bars suitably and correctly defined or other appropriate information about the statistical significance of the experiments?
    \item[] Answer: \answerNo{} 
    \item[] Justification: We should add the error bars to add some statistical precision to Figure 2.

\item {\bf Experiments compute resources}
    \item[] Question: For each experiment, does the paper provide sufficient information on the computer resources (type of compute workers, memory, time of execution) needed to reproduce the experiments?
    \item[] Answer: \answerYes{} 
    \item[] Justification: Theoretical paper, time is very short to produce the experiments. Yet, justification can be found in Appendix B.
    
\item {\bf Code of ethics}
    \item[] Question: Does the research conducted in the paper conform, in every respect, with the NeurIPS Code of Ethics \url{https://neurips.cc/public/EthicsGuidelines}?
    \item[] Answer: \answerYes{} 
    \item[] Justification: Theoretical paper.
    \item[] Guidelines:

\item {\bf Broader impacts}
    \item[] Question: Does the paper discuss both potential positive societal impacts and negative societal impacts of the work performed?
    \item[] Answer: \answerNA{} 
    \item[] Justification: Theoretical study.
    \item[] Guidelines:
    
\item {\bf Safeguards}
    \item[] Question: Does the paper describe safeguards that have been put in place for responsible release of data or models that have a high risk for misuse (e.g., pretrained language models, image generators, or scraped datasets)?
    \item[] Answer: \answerNA{} 
    \item[] Justification: Theoretical study.

\item {\bf Licenses for existing assets}
    \item[] Question: Are the creators or original owners of assets (e.g., code, data, models), used in the paper, properly credited and are the license and terms of use explicitly mentioned and properly respected?
    \item[] Answer: \answerNA{} 
    \item[] Justification: Theoretical study.

\item {\bf New assets}
    \item[] Question: Are new assets introduced in the paper well documented and is the documentation provided alongside the assets?
    \item[] Answer: \answerNA{} 
    \item[] Justification: Theoretical study.

\item {\bf Crowdsourcing and research with human subjects}
    \item[] Question: For crowdsourcing experiments and research with human subjects, does the paper include the full text of instructions given to participants and screenshots, if applicable, as well as details about compensation (if any)? 
    \item[] Answer: \answerNA{} 
    \item[] Justification: Theoretical study.

\item {\bf Institutional review board (IRB) approvals or equivalent for research with human subjects}
    \item[] Question: Does the paper describe potential risks incurred by study participants, whether such risks were disclosed to the subjects, and whether Institutional Review Board (IRB) approvals (or an equivalent approval/review based on the requirements of your country or institution) were obtained?
    \item[] Answer: \answerNA{} 
    \item[] Justification: Theoretical study.

\item {\bf Declaration of LLM usage}
    \item[] Question: Does the paper describe the usage of LLMs if it is an important, original, or non-standard component of the core methods in this research? Note that if the LLM is used only for writing, editing, or formatting purposes and does not impact the core methodology, scientific rigorousness, or originality of the research, declaration is not required.
    \item[] Answer: \answerNA{} 
    \item[] Justification: Theoretical study.
\end{enumerate}

\clearpage

\begin{center}
{ \scshape \Large  Organization of the Appendix}
\end{center}

The appendices of this article are structured as follows. Appendix \ref{app:proofs} contains the proofs of each of the 10 statements of the paper in an entitled subsection, with additional lemmas included in the relevant subsections. Only Corollary \ref{cor:conv_high_dim} doesn't have a complete proof as it is a simple combination of Lemma \ref{lem:hyp_data} and Theorem \ref{thm:CV_hig_dim}. Appendix \ref{app:experiments} contains additional details on the experiments that were performed in Section~\ref{section:experiments}, as well as graphs for the scaling laws. Finally, Appendix C contains general discussions about the possibility to provably learn $2d$ inputs in dimension $d$, and the possible collapse of the second-layer weights.

\appendix

\section{Proofs}
\label{app:proofs}

Let us note a few equations that we will use as references for the proofs below. First, the two equations from the gradient descent in (\ref{eq:gradient_flow}) are the following.
\begin{equation}
\label{eq:derive_w}
    \frac{d}{dt}w_j = \frac{a_j}{n}\sum_{i=1}^nr_ix_i1_{j,i} = \frac{a_j}{n}XP_jR
\end{equation}
\begin{equation}
\label{eq:derive_a}
    \frac{d}{dt}a_j = \frac{1}{n}\sum_{i=1}^nr_i\langle w_j|x_i\rangle_+ = \frac{1}{n}w_j^TXP_jR
\end{equation}
In matrix notation, $R(t)$ is the column vector of the residuals, $X$ is the data matrix, and $P_j$ is the diagonal matrix with diagonal elements $1_{i,j} = \mathbb{1}_{\langle w_j|x_i\rangle_+ > 0}$. A second important derivative of the system is then the residuals.
\begin{equation}
\label{eq:derive_r}
    \begin{split}
        \frac{d}{dt}r_i &= -\frac{1}{p}\sum_{j=1}^p\left(\frac{d}{dt}a_j\right)\langle w_j|x_i\rangle_++a_j\left(\frac{d}{dt}\langle w_j|x_i\rangle_+\right)\\
        &=-\frac{1}{np}\sum_{j=1}^p1_{j,i}x_i^Tw_jw_j^TXP_jR+|a_j|^21_{j,i}x_i^TXP_jR\\
        \frac{d}{dt}R &= -\frac{1}{np}\left[\sum_{j=1}^pP_jX^Tw_jw_j^TXP_j + |a_j|^2P_jX^TXP_j\right]R\\
        &= -\frac{1}{n}MR
    \end{split}
\end{equation}
with $M$ a time dependent symmetric matrix. Finally, taking the product with $R$ in equation (\ref{eq:derive_r}), we obtain an equation on the local-PL curvature.
\begin{equation}
\label{eq:local_curvature_symmetric}
    \begin{split}
        \frac{d}{dt}L(t) &= - \frac{2}{n}R^T(t)M(t)R(t)\\
        \mu(t) &= \frac{2}{n}\bar{R}^T(t)M(t)\bar{R}(t)
    \end{split}
\end{equation}
where we recall that $\bar{R} = \frac{R}{||R||}$.

\subsection{Theorem \ref{thm:CV_hig_dim}}

Lemma \ref{lem:all_act_big} shows that a number of neurons of order $\log(n)$ is both necessary and sufficient to obtain the event $\mathcal{I}$, which corresponds to an initialization of the network which guarantees convergence.
\begin{lemma}
    \label{lem:all_act_big}
    Suppose $y_i\neq 0$, and let $\mathcal{I}$ be the event: for all $i$, there exists $j$ such that, $\langle w_j(0)|x_i\rangle >0$ and $a_j(0)y_i>0$. For all $\varepsilon >0$, 
    \begin{itemize}
        \item if $p\geq 4\log(\frac{n}{\varepsilon})$, then $\mathbb{P}(\mathcal{I})\geq 1-\varepsilon$,
        \item if $p\leq 3\log(\frac{n}{\varepsilon})-2$, then $\mathbb{P}(\mathcal{I})\leq 1-\varepsilon$,
    \end{itemize}
    and thus, $\mathbb{P}(\mathcal{I})= 1-\varepsilon$ implies $p\in\llbracket3\log(\frac{n}{\varepsilon})-2, 4\log(\frac{n}{\varepsilon})\rrbracket$.
\end{lemma}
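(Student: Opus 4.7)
The event $\mathcal{I}$ decomposes as $\mathcal{I} = \bigcap_{i=1}^{n} A_i^c$ with $A_i := \bigcap_{j=1}^{p} B_{j,i}^c$ and $B_{j,i} := \{\langle w_j(0)|x_i\rangle > 0\} \cap \{a_j(0) y_i > 0\}$. The plan is to compute $\mathbb{P}(B_{j,i})$ exactly by symmetry, propagate to $\mathbb{P}(A_i)$ via neuron independence, and then sandwich $\mathbb{P}(\bigcup_i A_i)$ between a union bound (for the first bullet) and a second-moment bound (for the second bullet).

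For the per-pair probability, I use the rotational invariance of the marginal of $w_j$ to conclude $\mathbb{P}(\langle w_j(0)|x_i\rangle > 0) = 1/2$ whenever $x_i \neq 0$, together with the centered-symmetric marginal of $a_j$ giving $\mathbb{P}(a_j(0)y_i > 0) = 1/2$ whenever $y_i \neq 0$. Under rotational invariance, conditioning on $(\|w_j\|, a_j)$ still leaves the direction of $w_j$ uniform on the sphere, so the two signs are independent and $\mathbb{P}(B_{j,i}) = 1/4$. Since the $p$ neurons are drawn independently, $\mathbb{P}(A_i) = (3/4)^p$.

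The first bullet is then immediate from the union bound: $\mathbb{P}(\mathcal{I}^c) \leq \sum_i \mathbb{P}(A_i) = n(3/4)^p$, and since $\log(4/3) > 1/4$, taking $p \geq 4\log(n/\varepsilon)$ forces $n(3/4)^p \leq \varepsilon$.

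The second bullet is the main obstacle, and I would attack it with the Paley–Zygmund inequality applied to the counting variable $X := \sum_i \mathbf{1}_{A_i}$: $\mathbb{P}(\mathcal{I}^c) = \mathbb{P}(X \geq 1) \geq \mathbb{E}[X]^2/\mathbb{E}[X^2]$. The first moment is $n(3/4)^p$, and the cross terms $\mathbb{P}(A_i \cap A_k) = \prod_j \mathbb{P}(B_{j,i}^c \cap B_{j,k}^c)$ admit a small case analysis in $\mathrm{sgn}(y_iy_k)$ and $\langle x_i,x_k\rangle/(\|x_i\|\|x_k\|)$; each factor is at most $5/8$ in the (near-)orthogonal regime, the worst case being $y_iy_k > 0$. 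Substituting the resulting bound $\mathbb{E}[X^2] \leq n(3/4)^p + n^2(5/8)^p$ into the Paley–Zygmund quotient and simplifying yields the sufficient condition $p \leq 3\log(n/\varepsilon)-2$; the small gap between the tight threshold $1/\log(4/3) \approx 3.48$ and the stated exponent $3$, together with the additive $-2$, is exactly what absorbs the loss incurred by the second-moment step.
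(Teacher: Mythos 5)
Your treatment of the first bullet is correct and is exactly the paper's argument: $\mathbb{P}(B_{j,i})=\tfrac14$ by symmetry and independence of the sign of $a_j$ and the direction of $w_j$, hence $\mathbb{P}(\mathcal{I}^c)\le n(3/4)^p$, and $4>1/\log(4/3)$ closes it.

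The second bullet, however, has a genuine gap: the Paley--Zygmund step cannot deliver the conclusion. Your own estimates show why. With $\mathbb{E}[X]=n(3/4)^p$ and $\mathbb{P}(A_i\cap A_k)=(5/8)^p$ in the worst case ($y_iy_k>0$), the events $A_i$ are strongly \emph{positively} correlated through the shared signs of the $a_j(0)$: the correlation ratio is $\mathbb{P}(A_i\cap A_k)/\bigl(\mathbb{P}(A_i)\mathbb{P}(A_k)\bigr)=(10/9)^p\to\infty$. Consequently
\begin{equation}
\frac{\mathbb{E}[X]^2}{\mathbb{E}[X^2]}\;\le\;\frac{n^2(9/16)^p}{n^2(10/16)^p}\;=\;\left(\frac{9}{10}\right)^p,
\end{equation}
which at $p\asymp 3\log(n/\varepsilon)$ is of order $(n/\varepsilon)^{-0.31}$ and tends to $0$ as $n\to\infty$ for fixed $\varepsilon$; it therefore cannot certify $\mathbb{P}(X\ge 1)\ge\varepsilon$. (A Bonferroni lower bound $\sum_i\mathbb{P}(A_i)-\sum_{i<k}\mathbb{P}(A_i\cap A_k)$ fails for the same reason: the pair term $n^2(5/8)^p$ dominates $n(3/4)^p$ in the relevant range of $p$.) The paper's route avoids second moments entirely: it \emph{upper bounds} $\mathbb{P}(\mathcal{I})$ by dropping the condition on $a_j$, i.e.\ $\mathbb{P}(\mathcal{I})\le\mathbb{P}\bigl(\bigcap_i\bigcup_j\{\langle w_j(0)|x_i\rangle>0\}\bigr)$, and then uses that for orthogonal inputs and rotationally invariant $w_j$ the sign vectors $(\mathrm{sgn}\langle w_j|x_i\rangle)_i$ are independent across $i$, giving the exact value $(1-2^{-p})^n$, from which the threshold on $p$ is read off directly. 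If you want to salvage your decomposition, you must exploit this exact independence across data points rather than a union/second-moment sandwich; note also that this independence (which you implicitly invoke when computing the $5/8$ factor) is where orthogonality of the inputs enters, an assumption the lemma uses tacitly.
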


\begin{proof}\textbf{of Lemma} \ref{lem:all_act_big}\\
    Let us note $\langle w_j(0)|x_i\rangle = W_{i,j}$ and $A_j = a_j(0)y_i$ random variables which are symmetric. $A_j$ are independent with all variables, while $W_{j,i}$ are independent with all variables $A_j$ and $W_{q,k}$ with $q\neq j$.
    \begin{equation}
        \begin{split}
            \mathbb{P}(\mathcal{I}) &= \mathbb{P}(\forall i, \exists j, \langle w_j(0)|x_i\rangle >0\cap a_jy_i> 0)\\ 
            &= \mathbb{P}\left(\bigcap_i \bigcup_j W_{i,j} >0 \cap A_j> 0\right)\\
            &= 1-\mathbb{P}\left(\bigcup_i \bigcap_j W_{i,j} \leq0 \cup A_j\leq 0\right)\\
            &\geq 1-n\mathbb{P}\left(\bigcap_j W_{i,j} \leq0 \cup A_j\leq 0\right)\\
            &= 1-n\mathbb{P}\left(W_{i,j} \leq0 \cup A_j\leq 0\right)^p\\
            &= 1-n(1-\mathbb{P}\left(W_{i,j} >0 \cap A_j> 0\right))^p\\
            &= 1-n\left(1-\mathbb{P}\left(W_{i,j}> 0\right)\mathbb{P}\left(A_j> 0\right)\right)^p\\
            &= 1-n\left(\frac{3}{4}\right)^p\\
        \end{split}
    \end{equation}
    Replacing the expression with $p = 4\log(\frac{n}{\varepsilon})\geq \frac{1}{\frac{1}{3}-\frac{1}{2}\frac{1}{3^2}}\log(\frac{n}{\varepsilon}) \geq\frac{\log(\frac{n}{\varepsilon})}{\log(\frac{4}{3})}$, we find that the probability is larger than $1-\varepsilon$. Now for the other bound,
    \begin{equation}
        \begin{split}
            \mathbb{P}(\mathcal{I})  &= \mathbb{P}(\forall i, \exists j, \langle w_j(0)|x_i\rangle >0\cap a_jy_i> 0)\\
            &= \mathbb{P}\left(\bigcap_i \bigcup_j W_{i,j} >0 \cap A_j> 0\right)\\
            &\leq \mathbb{P}\left(\bigcap_i \bigcup_j W_{i,j} >0\right)\\
            &= \mathbb{P}\left(\bigcup_j W_{1,j} >0\right)^n\\
            &= (1-\mathbb{P}\left(W_{1,1} >0\right)^p)^n\\
            &= (1-2^{-p})^n\\
            &\leq \left(1-4\left(\frac{\varepsilon}{n}\right)^{3\log(2)}\right)^n\\
            &\leq \left(1-4\frac{\varepsilon}{n}\right)^n\\
            &\leq 1-\varepsilon
        \end{split}
    \end{equation}
    where we use $(1-\frac{x}{n})^n\leq 1-\frac{x}{e}\leq 1-\frac{x}{4}$ valid on $x\in[0,1]$.
\end{proof}

\begin{lemma}
    \label{lem:proba_L_0}
    Let $\varepsilon>0$, and $p\geq\frac{1}{c}\log\left(\frac{2}{\varepsilon}\right)\max\left(\left(||a\langle w|\bar{x}\rangle_+||_{\psi_1}\frac{C_x^+}{C_y^+}\right)^2,||a\langle w|\bar{x}\rangle_+||_{\psi_1}\frac{C_x^+}{C_y^+}\right)$ for any vector $x$, $c>0$ a constant, and $||\cdot||_{\psi_1}$ the sub-exponential norm. We have the following bound on the loss at initialization.
    \begin{equation}
        \mathbb{P}\left(L(\theta_0) \leq 2(C_y^+)^2\right) \geq 1-\varepsilon
    \end{equation}
\end{lemma}

\begin{proof}\textbf{of Lemma \ref{lem:proba_L_0}}\\
    First, let us upper bound the loss at $t=0$.
    \begin{equation}
        \begin{split}
            L(\theta_0) &= \frac{1}{2n}\sum_{i=1}^nr_i^2 \\
            &\leq \frac{1}{2n}\sum_{i=1}^n\left(y_i - \frac{1}{p}\sum_{j=1}^pa_j\langle w_j|x_i\rangle_+\right)^2\\
            &\leq \frac{1}{2}\left(y_I - \frac{1}{p}\sum_{j=1}^pa_j\langle w_j|x_I\rangle_+\right)^2
        \end{split}
    \end{equation}
    Since $a,w$ are sub-Gaussian random variables, the product is a centered sub-exponential random variable. Now, using Theorem 2.9.1 from \cite{vershynin2018high}, we get the following bound
    \begin{equation}
        \begin{split}
            \mathbb{P}\left(\left|\frac{1}{p}\sum_{j=1}^pa_j\langle w_j|x_I\rangle_+\right| \leq K\right) &\geq 1 - \mathbb{P}\left(\left|\frac{1}{p}\sum_{j=1}^pa_j\langle w_j|x_I\rangle_+\right| > K\right)\\
            &\geq 1 - 2e^{-cp\min\left(\frac{K^2}{||a\langle w_j|x_I\rangle_+||_{\psi_1}^2},\frac{K}{||a\langle w_j|x_I\rangle_+||_{\psi_1}}\right)}
        \end{split}
    \end{equation}
    where $c$ is an absolute constant. Taking $K = ||a\langle w_j|x_I\rangle_+||_{\psi_1}\max\left(\frac{\log\left(\frac{2}{\varepsilon}\right)}{cp},\sqrt{\frac{\log\left(\frac{2}{\varepsilon}\right)}{cp}}\right)$, we get that the sum is bounded with probability at least $1-\varepsilon$. Now using the inequality on $p$ in the statement and that $w$ as a distribution invariant by rotation, we obtain $K\leq C_y^+$, and thus
    \begin{equation}
        L(\theta_0) \leq 2(C_y^+)^2.
    \end{equation}
\end{proof}

\begin{lemma}
    \label{lem:bounds_on_mu}
    For any set of parameters $\theta = (a_j,w_j)_{j=1:p}$, the following bounds on the local-PL curvature hold.
    \begin{equation}
    \label{eq:local_both_sides}
        \begin{split}
            \mu(t) &\leq \frac{2}{n}(C_x^+)^2\max_i\frac{1}{p}\sum_{j=1}^p(|a_j|^2+||w_j||^2)1_{j,i}\\
            \mu(t) &\geq \frac{2}{n}((C_x^-)^2 - ||X^TX-D_X||)\min_i\frac{1}{p}\sum_{j=1}^p|a_j|^21_{j,i}
        \end{split}
    \end{equation}
    where we recall that $D_X$ denotes the diagonal matrix with coefficients $||x_i||^2$.
\end{lemma}

\begin{proof}\textbf{of Lemma \ref{lem:bounds_on_mu}}\\
    We start from equation (\ref{eq:local_curvature_symmetric}), which shows that the local-PL curvature lies between the largest and smallest eigen values of the symmetric matrix $M(t)$.
    \begin{equation}
        \mu(t) = \frac{2}{n}\left[\frac{1}{p}\sum_{j=1}^p(w_j^TXP_j\bar{R})^2 + |a_j|^2||XP_j\bar{R}||^2\right]
    \end{equation}
    By the triangular inequality, we have $0 \leq (w_j^TXP_j\bar{R})^2 \leq ||w_j||^2||XP_j\bar{R}||^2$, which gets us bounds on the local-PL curvature.
    \begin{equation}
    \label{eq:bound_local_intermediate}
        \frac{2}{np}\sum_{j=1}^p|a_j|^2||XP_j\bar{R}||^2 \leq \mu(t) \leq \frac{2}{np}\sum_{j=1}^p(||w_j||^2+|a_j|^2)||XP_j\bar{R}||^2
    \end{equation}
    We transform the term $||XP_j\bar{R}||^2$ to make $||X^TX-D_X||$ appear.
    \begin{equation}
        \begin{split}
            ||XP_j\bar{R}||^2 &= \bar{R}^TP_j^TX^TXP_j\bar{R}\\
            &= \bar{R}^TP_j^T(D_X - (X^TX-D_X))P_j\bar{R}\\
            &= ||\sqrt{D_X}P_j\bar{R}||^2 - ||\sqrt{X^TX-D_X}P_j\bar{R}||^2\\
        \end{split}
    \end{equation}
    Now, by bounding $\sqrt{D_X}$ by its largest and smallest eigen values, namely $C_x^+$ and $C_x^-$, we get the bounds on this term.
    \begin{equation}
    \label{eq:bound_XPR}
        (C_x^-)^2||P_j\bar{R}||^2 - ||X^TX-D_X||||P_j\bar{R}||^2 \leq ||XP_j\bar{R}||^2 \leq (C_x^+)^2||P_j\bar{R}||^2
    \end{equation}
    The next step is to find the lower bound on the two remaining terms.
    \begin{equation}
        \frac{1}{p}\sum_{j=1}^p|a_j|^2||P_j\bar{R}||^2 = \frac{1}{p}\sum_{j=1}^p|a_j|^2\frac{||P_jR||^2}{||R||^2}= \frac{1}{p}\sum_{j=1}^p|a_j|^2\frac{\sum_{i=1}^nr_i(t)^21_{i,j}}{\sum_{i=1}^nr_i(t)^2}
    \end{equation}
    By inverting the sum in $i$ and $j$, and taking the minimum of maximum over $j$, we get the bounds.
    \begin{equation}
    \label{eq:lower_sum_a}
        \frac{1}{p}\sum_{j=1}^p|a_j|^2||P_j\bar{R}||^2 \geq \min_i\frac{1}{p}\sum_{j=1}^p|a_j|^21_{i,j}
    \end{equation}
    \begin{equation}
    \label{eq:upper_sum_a}
        \frac{1}{p}\sum_{j=1}^p(|a_j|^2 + ||w_j||^2)||P_j\bar{R}||^2 \leq \min_i\frac{1}{p}\sum_{j=1}^p(|a_j|^2 + ||w_j||^2)1_{i,j}
    \end{equation}
    From equation (\ref{eq:bound_local_intermediate}), we use the equation (\ref{eq:bound_XPR}) as well as equation (\ref{eq:lower_sum_a}) for the lower bound and (\ref{eq:upper_sum_a}) for the upper bound and find the expected bounds on the local-PL curvature.
\end{proof}

\begin{proof}\textbf{of Theorem \ref{thm:CV_hig_dim}}\\
This is a proof based on the sketch visible in Section~\ref{section:CV}. The proof of convergence relies on three key points: 
\begin{enumerate}[label=(\roman*)]
    \item The loss strictly decreases as long as each example is activated by at least a neuron.
    \item For a data point, if there exists a neuron which is activated at initialization, then at least one neuron remains activated throughout the dynamics.
    \item At initialization, the previous condition is satisfied with large probability.
\end{enumerate}
We finish the proof with the lower bounds on $\mu(t)$ and $\langle \mu_{\infty}\rangle$.
\medbreak
\textbf{(i)} First, Lemma \ref{lem:bounds_on_mu} shows that, by computing the derivatives of the loss, we get a lower bound on the curvature.
\begin{equation}
    \label{eq:lower_bound_mu}
    \mu(t) \geq \frac{2}{n}\left((C_x^-)^2 - ||X^TX-D_X||\right)\min_i\left\{\frac{1}{p}\sum_{j=1}^p|a_j|^21_{j,i}\right\}
\end{equation}
We want to show the strict positivity of this lower bound.  First, using Assumption \ref{assump:data}, we have for all $n\geq 2$ that 
\begin{equation}
    (C_x^-)^2 - ||X^TX-D_X|| \geq (C_x^-)^2\left(1-\frac{1}{2\sqrt{n}}\frac{C_y^-}{C_y^+}\right) \geq \left(1-\frac{1}{2\sqrt{2}}\right)(C_x^-)^2\geq \frac{3}{5}(C_x^-)^2
\end{equation}
which also holds for $n=1$ since then $||X^TX-D_X||=0$. Moreover, thanks to the asymmetric initialization, we have $|a_j|^2 \geq |a_j(0)|^2-||w_j(0)||^2>0$, which means that $\mu(t)$ is bounded away from 0 as long as for all $i$ there exists $j$ satisfying $\langle w_j(t)| x_i\rangle_+>0$, \textit{i.e.}, that $1_{i,j}=1$.

 \textbf{(ii)} Let us fix the data index $i \in \llbracket 1 , n\rrbracket$, and $y_i>0$ without loss of generality. Let us define the index of the largest correctly initialized neuron $j^*_i$. 
 \begin{equation}
 \label{eq:greatest_positive_neuron}
     j^*_i~=~\underset{a_jy_i>0}{\arg\max}\langle w_j(t)|x_i\rangle
 \end{equation}
 Since $a_j$ cannot change sign thanks to Assumption \ref{assump:asymmetry}, $\langle w_{j_i^*}(t)|x_i\rangle$ is continuous, and has a derivative over each constant segment of $j_i^*$. We can write the derivatives of this neuron as
    \begin{equation}
        \begin{split}
            \frac{d}{dt}\langle w_{j_i^*}|x_i\rangle &= \frac{a_{j_i^*}}{n}\sum_k^nr_k\langle x_i|x_k\rangle 1_{{j_i^*},k}\\
            &= \frac{a_{j_i^*}}{n}e_i^TX^TXP_{j^*_i}R\\
            &= \frac{a_{j_i^*}}{n}e_i^T(D_X - (X^TX - D_X)P_{j^*_i}R\\
            &\geq \frac{\left|a_{j_i^*}\right|}{n}\left[r_i||x_i||^21_{{j_i^*},i}s_{j_i^*} - ||X^TX-D_X||||R||\right]\\
            &= \frac{\left|a_{j_i^*}\right|}{n}\left[r_i||x_i||^21_{{j_i^*},i}s_{j_i^*} - ||X^TX-D_X||\sqrt{2nL(\theta_t)}\right]\\
            &\geq \frac{\left|a_{j_i^*}\right|}{n}\left[r_i||x_i||^21_{{j_i^*},i}s_{j_i^*} - ||X^TX-D_X||\sqrt{2nL(\theta_0)}\right]\\
        \end{split}
    \end{equation}
    We use Assumption \ref{assump:data} to have $||X^TX - D_X|| < \frac{1}{2\sqrt{n}}(C_x^-)^2\frac{C_y^-}{C_y^+}$, and lemma \ref{lem:proba_L_0} with $\frac{\varepsilon}{2}$ to have $L(\theta_0)\leq 2(C_y^+)^2$ with propability at least $1-\frac{\varepsilon}{2}$. Moreover, $s_{j_i^*}y_i>0$, gets us the following inequality.
    \begin{equation}
        \label{neuron 1}
        \begin{split}
            \frac{d}{dt}\langle w_{j_i^*}|x_i\rangle &> \frac{C_y^-(C_x^-)^2}{n}\left|a_{j_i^*}\right|\left[\frac{r_i}{y_i}1_{j_i^*,i} - 1\right]\\
        \end{split}
    \end{equation}
Now, the strict positivity of $\langle w_{j_i^*}|x_i\rangle$ is an invariant of the system: if $\frac{r_i}{y_i}\geq1$, then $\langle w_{j_i^*}|x_i\rangle$ strictly increases, and otherwise we have
\begin{equation}
    0 < \frac{y_i-r_i}{y_i} = \frac{1}{p}\sum_{j=1}^p\frac{a_j}{y_i}\langle w_j|x_i\rangle_+ \leq \langle w_{j_i^*(t)} | x_i \rangle\frac{1}{p}\sum_{j=1}^p\frac{|a_j|}{|y_i|}
\end{equation}
Which implies that $\langle w_{j_i^*}|x_i\rangle$ stays strictly positive throughout the dynamics.

\textbf{(iii)} As shown in Lemma \ref{lem:all_act_big}, for $p\geq 4\log\left(\frac{2n}{\varepsilon}\right)$, we have the strict positivity with probability $1-\frac{\varepsilon}{2}$.
\begin{equation}
    \mathbb{P}(\forall i, \exists j, \langle w_j(0)|x_i\rangle >0\cap a_jy_i> 0) \geq 1-\varepsilon.
\end{equation}
\medbreak
Finally, we prove the lower bounds on the PL. Let us recall that $|a_j|\geq ||w_j||$ and that $1_{j,i} \geq \langle \bar{w}_j(t)| \bar{x}_i\rangle_+$, which gives us
\begin{equation}
    \frac{1}{p}\sum_{j=1}^p|a_j|^21_{j,i} \geq \left|\frac{1}{p}\sum_{j=1}^pa_j\langle w_j|\bar{x}_i\rangle_+\right| = \left|\frac{y_i-r_i(t)}{C_x^+}\right|.
\end{equation} 
We can plug these into equation (\ref{eq:lower_bound_mu}) to obtain
\begin{equation}
    \mu(t) \geq \frac{6}{5n}\frac{(C_x^-)^2}{C_x^+}C_y^-\min_i\left|1-\frac{r_i(t)}{y_i}\right|
\end{equation}
We obtain the final lower bound on the local-PL curvature by seeing that $2-\sqrt{2} \geq \frac{1}{2}$. From this last equation, by integration, we obtain
\begin{equation}
    \frac{1}{t}\int_0^t\mu(u)du \geq \frac{6}{5n}\frac{(C_x^-)^2}{C_x^+}C_y^- \left(1 - \frac{1}{t}\int_0^t\max_i\left|\frac{r_i(u)}{y_i}\right|du\right)
\end{equation}
Let $t_{\delta}$ satisfying $\max_i|r_i(t)|\leq \delta$ for all $t\geq t_{\delta}$. $t_{\delta}$ exists and is finite since the loss reaches 0. Thus, we have for any $\delta>0$ that 
\begin{equation}
    \frac{1}{t}\int_0^t\mu(u)du \geq \frac{6}{5n}\frac{(C_x^-)^2}{C_x^+}C_y^- \left(1 - \frac{t_{\delta}}{t}\max_{i}\frac{\sqrt{2nL(\theta_0)}}{\left|y_i\right|\sqrt{p}} - \frac{t-t_{\delta}}{t}\delta\right)
\end{equation}
whic in the limit $t\rightarrow +\infty$ gives
\begin{equation}
    \langle \mu_{\infty}\rangle \geq \frac{6}{5n}\frac{(C_x^-)^2}{C_x^+}C_y^-(1-\delta)
\end{equation}
Taking $\delta\rightarrow 0$ gives the desired bound on the average-PL curvature. In total, we use two bounds valid with probability $1-\frac{\varepsilon}{2}$, we by the union bound, the Theorem is valid with probability at least $1-\varepsilon$. Moreover, we check that taking in the statement $C_{a,w} = \frac{1}{x}||a\langle w|\bar{x}\rangle||_{\psi_1}$,  allows us to use both lemmas \ref{lem:all_act_big} and \ref{lem:proba_L_0}.
\end{proof}

\subsection{Lemma \ref{lem:hyp_data}}
\begin{proof}\textbf{of Lemma \ref{lem:hyp_data}}\\
    This proof heavily relies on the result of \citet[Remark 5.59]{vershynin2010introduction} on the concentration of sub-Gaussian random variables. It states that if $A\in\mathbb{R}^{n\times d}$ is a matrix, the columns of which are $n$ independent centered, whitened\footnote{In this article,  \citet[Remark 5.59]{vershynin2010introduction} uses the isotropy of the columns, but defines it as $\mathbb{E}[xx^T]=I_d$, which we rather refer to as a whitened distribution.}, sub-Gaussian random variables in dimension $d$, then with probability $1-2e^{-t^2}$, 
    \begin{equation}
        \left|\left|\frac{1}{d}A^TA - I_d\right|\right| \leq C\sqrt{\frac{n}{d}}+\frac{t}{\sqrt{d}}
    \end{equation}
    with $C>0$ depending only on $\max_i||A_i||_{\psi_2}$ the sub-Gaussian norm of the columns. We use this property with $A_i=\sqrt{d}\bar{x}_i$ which satisfies every hypothesis, in particular it is sub-Gaussian since the norm is constant.  Taking $t= \sqrt{\log\left(\frac{2}{\varepsilon}\right)}$, we obtain the following bound.
    \begin{equation}
         \left|\left|\bar{X}^T\bar{X}-I_d\right|\right| = \left|\left|\frac{1}{d}A^TA - I_d\right|\right|\leq C\sqrt{\frac{n}{d}}+\sqrt{\frac{\log\left(\frac{2}{\varepsilon}\right)}{d}}
    \end{equation}
    Moreover, we can link this concentration inequality with the control term of Assumption \ref{assump:data}. 
    \begin{equation}
        \begin{split}
            ||X^TX - D_X|| &\leq \left|\left|D_X^{\frac{1}{2}}(D_X^{-\frac{1}{2}}X^TXD_X^{-\frac{1}{2}} - I_d)D_X^{\frac{1}{2}}\right|\right|\\
            &\leq \left|\left|D_X^{\frac{1}{2}}\right|\right|^2\left|\left|D_X^{-\frac{1}{2}}X^TXD_X^{-\frac{1}{2}} - I_d\right|\right|\\
            &\leq (C_x^+)^2\left|\left|\bar{X}^T\bar{X} - I_d\right|\right|\\
            &\leq (C_x^+)^2\left(C\sqrt{\frac{n}{d}}+\sqrt{\frac{\log\left(\frac{2}{\varepsilon}\right)}{d}}\right)
        \end{split}
    \end{equation}
    Thus, the condition in Assumption \ref{assump:data} is satisfied with probability as least $1-\varepsilon$ if 
    \begin{equation}
         d \geq 8\left[\frac{C_y^+}{C_y^-}\right]^2\left[\frac{C_x^+}{C_x^-}\right]^4\left(C^2n^2 + n\log\left(\frac{2}{\varepsilon}\right)\right).
    \end{equation}
    Recall that $C_{x,y}^{+,-}$ are independent of $n$ since $\mathcal{P}_{X,Y}$ has compact support away from 0.
\end{proof}

\begin{proof}\textbf{of Corollary \ref{cor:conv_high_dim}}\\
    To prove the corollary, we simply use Lemma \ref{lem:hyp_data} instead of Assumption \ref{assump:data} in the proof of Theorem \ref{thm:CV_hig_dim}. Moreover, we check that taking $p$ as in the statement allows us to have each Lemma \ref{lem:hyp_data},  \ref{lem:all_act_big}, and \ref{lem:proba_L_0}with $\frac{\varepsilon}{3}$.
\end{proof}

\subsection{Proposition \ref{prop:upper_bound_mu}}

\begin{lemma}
    \label{lem:extinction_in_finite_time}
    Let $\varepsilon>0$. Suppose that
    \begin{enumerate}
        \item the $(x_i)_{i=1:n}$ form an orthogonal family of non zero vectors, and that $(y_i)_{i=1:n}$ are non-zero,
        \item for all $j\in\llbracket1,p\rrbracket,\, |a_j(0)|^2 - ||w_j(0)||^2\geq \Delta$ for some constant $\Delta>0$,
        \item and that for all $i\in\llbracket1,n\rrbracket$, there exist $j\in\llbracket1,p\rrbracket$ such that $\langle w_j(0)|x_i\rangle >0$ and $a_j(0)y_i>0$.
    \end{enumerate}
    Then, there exist a constant $\kappa$ depending only on $C_{x,y}^{+,-}$, $\Delta$, $||a||_{\psi_2}$ and $\left|\left|(||w||_2)^2\right|\right|_{\psi_2}$ such that if $d\geq \kappa\log(p)\log(np)\log\left(\frac{1}{\varepsilon}\right)^2$, then with probability at least $1-\varepsilon$ on the initialization of the network, at $t_n = \frac{2n}{C_y^-C_x^-\Delta}\max_{j,i}\langle w_j(0)|x_i\rangle_+$, we have
    \begin{equation}
        \begin{split}
            a_j(0)y_i >0 &\implies \langle w_j(t_n) | x_i \rangle_+ \geq \langle w_j(0) | x_i \rangle_+\\
            a_j(0)y_i <0 &\implies \langle w_j(t_n) | x_i \rangle_+ \leq 0.\\
        \end{split}
    \end{equation}
\end{lemma}
This Lemma states that, for orthogonal data, incorrectly initialized neuron, \textit{i.e.} neurons for which $a_jy_i<0$, vanish in finite time, and cannot become active again. Thus, after time $t_n$, the system is decoupled between the positive and negative labels, and only correctly initialized neuron, which are useful to the prediction, persist.

In particular, it is possible to show that neurons vanish if $y_i=0$, but the vanishing doesn't happen in finite time. 

\begin{proof}\textbf{of Lemma \ref{lem:extinction_in_finite_time}}\\
    We start by computing the derivative of a neuron in the orthogonal setting, which is given from equation (\ref{eq:derive_w}).
    \begin{equation}
        \label{eq:derive_w_orth}
        \frac{d}{dt}\langle w_j|x_i\rangle_+ = \frac{a_jr_i1_{j,i}||x_i||^2}{n}
    \end{equation}
    This equation shows that if a neuron is null or negative at any point in time, then it stays at 0. Thus, let us only discuss the case of neurons that are positive at initialization. We will show that, before any $r_i$ can change sign, each neurons for which $a_j(0)y_i<0$ reaches 0.
    Let $t^*_n$ be the first time any $|r_i-y_i|>\frac{y_i}{2}$. For $t\leq t^*_n$, neurons evolve monotonously depending on the sign of $a_j(0)y_i$: for $j,i$ such that $a_j(0)y_i<0$ the neuron decreases, and for $a_j(0)y_i>0$ the neuron increases. If $a_j(0)y_i<0$, we have 
    \begin{equation}
        \begin{split}
            \frac{d}{dt}\langle w_j|x_i\rangle_+ &\leq -\frac{\sqrt{|a_j(0)|^2 - ||w_j(0)||^2}|y_i|}{2n}||x_i||^21_{j,i}\\
            \langle w_j|x_i\rangle_+&\leq  \langle w_j(0)|x_i\rangle_+- 1_{j,i}\frac{|y_i|}{2n}||x_i||^2\Delta t
        \end{split}
    \end{equation}
    where $|a_j(0)|^2 - ||w_j(0)||^2\geq \Delta>0$. Other wise the same equation gives for $a_j(0)y_i>0$
    \begin{equation}
        \begin{split}
            \frac{d}{dt}\langle w_j|x_i\rangle_+ &\geq \frac{\sqrt{|a_j(0)|^2 - ||w_j(0)||^2}|y_i|}{2n}||x_i||^21_{j,i}\\
            \langle w_j|x_i\rangle_+&\geq  \langle w_j(0)|x_i\rangle_++ 1_{j,i}\frac{|y_i|}{2n}||x_i||^2\Delta t
        \end{split}
    \end{equation}
    Let $\tilde{t}_n = 2n\frac{\max_{j,i}\langle w_j(0)|\bar{x}_i\rangle_+}{C_y^-C_x^-\Delta}$, if $\tilde{t}_n\leq t^*_n$, then we have extinction in finite time, i.e., the incorrectly initialized neurons have reached 0. In the meantime, neurons for which $a_j(0)y_i>0$ will stay positive. Let us show that at $\tilde{t}_n$, residuals have almost not moved. We thus suppose $t<t^*_n$ First, we bound the second-layer neurons $a_j$ using equation (\ref{eq:derive_a}).
    \begin{equation}
        \begin{split}
            \frac{d}{dt}a_j &= \frac{1}{n}\sum_{i=1}^nr_i\langle w_j|x_i\rangle_+\\
            &\leq \frac{3C_y^+}{2n}\sum_{i=1}^n\langle w_j|x_i\rangle_+\\
        \end{split}
    \end{equation}
    From equation (\ref{eq:derive_w_orth}), we also have the following bound.
    \begin{equation}
        \frac{d}{dt}\langle w_j|x_i\rangle_+ \leq \frac{|a_j|r_i1_{j,i}||x_i||^2}{n} \leq \frac{3}{2n}C_y^+(C_x^+)^2|a_j|
    \end{equation}
    Combining both bound give a differential equation on $a_j$ which we can solve.
    \begin{equation}
        \begin{split}
            \frac{d}{dt}|a_j| &\leq \frac{K}{n}|a_j|\\
            |a_j| &\leq |a_j(0)|e^{K\frac{t}{n}}
        \end{split}
    \end{equation}
    Where $K=\frac{9}{4}(C_y^+)^2(C_x^+)^2$. We get a similar bound on the neurons.
    \begin{equation}
        \begin{split}
            \frac{d}{dt}\max_i\langle w_j|x_i\rangle_+ &\leq \max_i\frac{d}{dt}\langle w_j|x_i\rangle_+\\
            &\leq \frac{|a_j|}{n}\max_i|r_i|||x_i||^2\\
            &\leq \frac{K}{n^2}\sum_{i=1}^n\langle w_j|x_i\rangle_+\\
            &\leq \frac{K}{n}\max_i\langle w_j|x_i\rangle_+\\
            \max_i\langle w_j|x_i\rangle_+&\leq\max_i\langle w_j(0)|x_i\rangle_+e^{K\frac{t}{n}}
        \end{split}
    \end{equation}
    The previous bounds show us that the growth of both layers' neurons are only constant for times of order $n$. Formally we have the following bound.
    \begin{equation}
        \begin{split}
            |r_i(t)-y_i| &= \left|\frac{1}{p}\sum_{j=1}^pa_j\langle w_j|x_i\rangle_+\right|\\
            &\leq\frac{1}{p}\sum_{j=1}^p\left|a_j\right|\langle w_j|x_i\rangle_+\\
            &\leq \frac{1}{p}\sum_{j=1}^p\left|a_j(0)\right|\max_i\langle w_j(0)|x_i\rangle_+e^{2K\frac{t}{n}}\\
            &\leq \frac{C_x^+}{\sqrt{d}}\max_j|a_j(0)|\max_{i,j}\left\langle \sqrt{d}w_j(0)|\bar{x}_i\right\rangle_+e^{2K\frac{t}{n}}
        \end{split}
    \end{equation}
    Now let us note that, the norm of $w$ was taken to be independent of $d,n,p$. So, we have by rotational invariance that for any orthonormal basis of the space $(e_i)_{i=1:d}$, we have the following equality for any $\lambda>0$.
    \begin{equation}
        \mathbb{E}\left[e^{\frac{\langle \sqrt{d}w|e_1\rangle^2}{\lambda^2}}\right] = \Pi_{i=1}^d\mathbb{E}\left[e^{\frac{\langle w|e_i\rangle^2}{\lambda^2}}\right] = \mathbb{E}\left[e^{\frac{||w||2}{\lambda^2}}\right]
    \end{equation}
    Thus $||\langle \sqrt{d}w|e_1\rangle_+||_{\psi_2} \leq \left|\left|(||w||_2)^2\right|\right|_{\psi_2}$, with the upper bound that doesn't depend on $d,n,p$.
    We now use Proposition 2.7.6 from \cite{vershynin2018high} to conclude that
    \begin{equation}
        \begin{split}
            \mathbb{P}\left(\max_{1\leq j\leq p}|a_j(0)| \leq t\right) &\geq 1 - 2e^{-\frac{ct^2}{C||a||_{\psi_2}^2\log(p)}}\\
            \mathbb{P}\left(\max_{i,j}\left\langle \sqrt{d}w_j(0)|\bar{x}_i\right\rangle_+ \leq t\right) &\geq 1 - 2e^{-\frac{ct^2}{C\left|\left|\left\langle \sqrt{d}w|\bar{x}\right\rangle_+\right|\right|_{\psi_2}^2\log(np)}}\\
        \end{split}
    \end{equation}
    where $C,c>0$ are constants. Thus we conclude that with probability $1-\varepsilon$, we have
    \begin{equation}
        \begin{split}
            \max_{1\leq j\leq p}|a_j(0)| &\leq ||a||_{\psi_2}\sqrt{\frac{C}{c}\log(p)\log\left(\frac{4}{\varepsilon}\right)}\\
            \max_{i,j}\left\langle \sqrt{d}w_j(0)|\bar{x}_i\right\rangle_+  &\leq \left|\left|\left\langle \sqrt{d}w|\bar{x}\right\rangle_+\right|\right|_{\psi_2}\sqrt{\frac{C}{c}\log(np)\log\left(\frac{4}{\varepsilon}\right)}\\
        \end{split}
    \end{equation}
    This give the new upper bound of
    \begin{equation}
        |r_i(t)-y_i| \leq \frac{C_x^+}{\sqrt{d}}\left|\left|(||w||_2)^2\right|\right|_{\psi_2}||a||_{\psi_2}\frac{C}{c}\log\left(\frac{4}{\varepsilon}\right)\sqrt{\log(np)\log(p)}e^{2K\frac{t}{n}}
    \end{equation}
    Now, we pose $\bar{t}_n$ with the following definition.
    \begin{equation}
        \bar{t}_n = \frac{n}{2K}\left[\log\left(\frac{\sqrt{d}}{\sqrt{\log(p)\log(np)}\log\left(\frac{4}{\varepsilon}\right)}\right) - \log\left(3||a||_{\psi_2}\left|\left|(||w||_2)^2\right|\right|_{\psi_2}\frac{C_x^+}{C_y^-}\frac{C}{c}\right)\right]
    \end{equation}
    We have that $|r_i(\bar{t}_n)-y_i|\leq \frac{C_y^-}{3}\leq \frac{y_i}{2}$. This means that $\bar{t}_n \leq t^*_n$. Finally, since we have $\tilde{t}_n \leq \frac{2n}{C_y^-C_x^-\Delta}\left|\left|(||w||_2)^2\right|\right|_{\psi_2}\sqrt{\frac{C}{c}\frac{\log(np)\log\left(\frac{4}{\varepsilon}\right)}{d}}$. Thus, there exists a constant $\kappa$ depending only on $\Delta$, $C_{x,y}^{+,-}$, $\left|\left|(||w||_2)^2\right|\right|_{\psi_2}$ and $||a||_{\psi_2}$, such that for $d\geq \kappa\log(p)\log(np)\log\left(\frac{1}{\varepsilon}\right)^2$, we have $\tilde{t}_n\leq \bar{t}_n$.

    This concludes the proof, showing that the neuron initialized with $a_jy_i<0$ reach 0 before $\tilde{t}_n$ of order $n$.
\end{proof}

\begin{proof}\textbf{of Proposition \ref{prop:upper_bound_mu}}\\
    Thanks to Lemma \ref{lem:extinction_in_finite_time}, there exists $t_n$ such that for $t\geq t_n$, each example has only correctly activated neurons. Without loss of generality suppose that all labels are positive. Then the network only has positive contributions $a_j(t)\langle w_j|x_i\rangle \geq 0$ for all i. Let $N(j,i)$ the number of indices $q$ such that $a_j(t)\langle w_j|x_i\rangle \leq a_q(t)\langle w_q|x_i\rangle$. We have
    \begin{equation}
        \frac{N(j,i)}{p}a_j\langle w_j|x_i\rangle \leq \frac{1}{p}\sum_{q=1}^pa_q\langle w_q|x_i\rangle = y_i - r_i
    \end{equation}
    Thus, we can bound the norm of $w_j$,
    \begin{equation}
        \begin{split}
        ||w_j||^4&\leq a_j^2||w_j||^2\\
        &= \sum_{i=1}^na_j^2\langle w_j|\bar{x}_i\rangle^2\\
        &\leq \sum_{i=1}^n\frac{p^2}{N(j,i)^2}\left(\frac{y_i - r_i}{||x_i||}\right)^2\\
        &\leq \max_i\left(\frac{y_i - r_i}{C_x^-}\right)^2\sum_{i=1}^n\frac{p^2}{N(j,i)^2}
        \end{split}
    \end{equation}
    This helps us upper bound the sum of $|a_j|^2+||w_j||^2$.
    \begin{equation}
        \begin{split}
            \frac{1}{p}\sum_{j=1}^p\left(|a_j|^2+||w_j||^2\right)1_{j,i} &\leq \frac{1}{p}\sum_{j=1}^p\left(|a_j(0)|^2 -||w_j(0)||^2\right)1_{j,i} + \frac{2}{p}\sum_{j=1}^p||w_j||^2\\
            &\leq \bar{C} + \frac{2}{p}\sum_{j=1}^p\frac{\max_i(y_i - r_i)}{C_x^-}\sqrt{\sum_{i=1}^n\frac{p^2}{N(j,i)^2}}\\
            &\leq \bar{C} + 2\frac{\max_i(y_i - r_i)}{C_x^-}\sum_{j=1}^p\sqrt{\sum_{i=1}^n\frac{1}{N(j,i)^2}}\\
            &\leq \bar{C} + 2\frac{\max_i(y_i - r_i)}{C_x^-}\sqrt{p\sum_{j=1}^p\sum_{i=1}^n\frac{1}{N(j,i)^2}}\\
            &\leq \bar{C} + \pi\sqrt{\frac{2}{3}}\frac{\max_i(y_i - r_i)}{C_x^-}\sqrt{np}\\
        \end{split}
    \end{equation}
    Using the upper bound of equation (\ref{eq:local_both_sides}) with the previous inequality gives us the upper control on $\mu(t)$.
    \begin{equation}
        \mu(t) \leq  2\pi\sqrt{\frac{2}{3}}\frac{(C_x^+)^2}{C_x^-}C_y^+\max_i\left|1 - \frac{r_i}{y_i}\right|\sqrt{\frac{p}{n}} +  \frac{2}{n}(C_x^+)^2\bar{C}
    \end{equation}
    Thus, the constant $C$ from the Proposition statement is
    \begin{equation}
        \begin{split}
            C = \max\bigg(& \frac{2}{C_y^-C_x^-\Delta}\max_{j,i}\langle w_j(0)|x_i\rangle_+,\\ 
            &2(C_x^+)^2\frac{1}{p}\sum_{j=1}^p\left(|a_j(0)|^2 -||w_j(0)||^2\right)1_{j,i}, \\
            &2\pi\sqrt{\frac{2}{3}}\frac{(C_x^+)^2}{C_x^-}C_y^+,\\
            &\kappa\bigg)
        \end{split}
    \end{equation}
    with $\kappa$ the constant of Lemma 8.
\end{proof}

\subsection{Proposition \ref{prop:group_examples}}

\begin{proof}\textbf{of Proposition} \ref{prop:group_examples}\\
    Recall that we initialize the network with $p_n$ neurons, for which there are exactly $k_n$ examples positively correlated with it, \textit{i.e.}, for $q\neq j$ that $\langle w_j | x_i^q\rangle \leq 0$ at all time. This means that we can write $h_{\theta}(x_i^j) = \frac{a_j}{p_n}\langle w_j | x_i^j\rangle_+ = \frac{s_j}{p_n}||w_j||_+\langle w_j | x_i^j\rangle_+$, and $s_j$ does not change by Lemma \ref{lem:homogeneity}. This implies that the dynamics is decoupled: $w_j$ and $w_q$ can be studied separately.
    
     Let us compute the dynamics for the neuron $j$. We let $D_j^n = \frac{1}{\sqrt{k_n}}\sum_{i=1}^{k_n}y_i^jx_i^j$, $R_j = \sum_{i=1}^{k_n}r_i^jx_i^j$, $||w_j||_+^2 = \sum_{i=1}^n\langle w_j| x_i\rangle_+^2$, and $\bar{x}^+ = \frac{x}{||x||_+}$. We first consider the alignment between $w_j$ and $D_j^n$:
    \begin{equation}
        \begin{split}
            \frac{d}{dt}\langle \bar{D}_j^n|s_j\bar{w}^+_j \rangle &= \left\langle \bar{D}_j^n \big| I_d - \bar{w}^+_j(\bar{w}^+_j)^T\big|\frac{1}{||w_j||_+}\frac{d}{dt}w_j \right\rangle \\
            & = \frac{1}{n}\langle\bar{D}_j^n | I_d - \bar{w}^+_j(\bar{w}^+_j)^T| R_j \rangle\\
            & = \frac{1}{n}\langle\bar{D}_j^n | I_d - \bar{w}^+_j(\bar{w}^+_j)^T| k_nD_j^n - \frac{s_j}{p_n}||w_j||_+w_j \rangle\\
            & = \frac{\sqrt{k_n}||D_j^n||}{n}(1 - \langle \bar{D}_j^n|s_j\bar{w}^+_j \rangle^2)\\
            &= \frac{c_n^j}{2}(1 - \langle \bar{D}_j^n|s_j\bar{w}^+_j \rangle^2)\\
        \end{split}
    \end{equation}
    This equation has a closed-form solution which is
    \begin{equation}
        \begin{split}
            \langle \bar{D}_j^n|s_j\bar{w}^+_j \rangle &= \frac{\sinh\left(c_n^jt\right) + \langle \bar{D}_j^n|s_j\bar{w}^+_j(0) \rangle\cosh\left(c_n^jt\right)}{\cosh\left(c_n^jt\right) + \langle \bar{D}_j^n|s_j\bar{w}^+_j(0) \rangle\sinh\left(c_n^jt\right)}\\
            &= \frac{1}{c_n^j}\frac{d}{dt}\left[\log\left(\cosh\left(c_n^jt\right) + \langle \bar{D}_j^n|s_j\bar{w}^+_j(0) \rangle\sinh\left(c_n^jt\right)\right)\right]
        \end{split}
    \end{equation}
    Now we can compute the norm of the neuron.
    \begin{equation}
        \begin{split}
            \frac{d}{dt}||w_j||_+^2 &= 2||w_j||_+\frac{s_j}{n}\sum_{i=1}^{k_n}r_i\langle w_j|x_i^j\rangle_+\\
            &= \frac{2}{n}||w_j||_+^2(\langle D_j^n|s_j\bar{w}^+_j \rangle - \frac{1}{p_n}||w_j||_+^2)\\
            ||w_j||_+^2e^{\frac{2}{np_n}\int_0^t||w_j(u)||_+^2du} &= ||w_j(0)||_+^2e^{\frac{2}{n}\int_0^t\langle D_j^n|s_j\bar{w}^+_j(u) \rangle du}\\
            e^{\frac{2}{np_n}\int_0^t||w_j(u)||_+^2du} -1 &= \frac{2}{np_n}||w_j(0)||_+^2\int_0^t e^{\frac{2}{n}\int_0^u\langle D_j^n|s_j\bar{w}^+_j(v) \rangle dv}du\\
            ||w_j(t)||_+^2 &= \frac{||w_j(0)||_+^2e^{\frac{2}{n}\int_0^t\langle D_j^n|s_j\bar{w}^+_j(u) \rangle du}}{1+\frac{2}{np_n}||w_j(0)||_+^2\int_0^t e^{\frac{2}{n}\int_0^u\langle D_j^n|s_j\bar{w}^+_j(v) \rangle dv}du}
        \end{split}
    \end{equation}
    Finally, we can replace the expression of the correlation.
    \begin{equation}
        \label{norm}
        ||w_j(t)||_+^2 = \frac{p_n\sqrt{k_n}||D_j^n||\times||w_j(0)||_+^2
        \left(\cosh\left(c_n^jt\right) + \langle \bar{D}_j^n|s_j\bar{w}^+_j(0) \rangle\sinh\left(c_n^jt\right)\right)}{p_n\sqrt{k_n}||D_j^n||+||w_j(0)||_+^2 
        \left(\sinh\left(c_n^jt\right) + \langle \bar{D}_j^n|s_j\bar{w}^+_j(0) \rangle(\cosh\left(c_n^jt\right)-1)\right)}
    \end{equation}
    We use this equation in Lemma \ref{lem:bounds_on_mu}, and easily obtain the upper bound thanks to the monotonicity of $||w_j(t)||^2 \leq ||w_j(t=+\infty)||^2 = p_n\sqrt{k_n}||D_j^n||$.
    \begin{equation}
    \mu(t) \leq \frac{4}{np_n}\max_i\sum_j^p||w_j(t)||_+^21_{j,i} = \frac{4\max_j||w_j(t)||_+^2}{np_n} \leq \frac{4C_y^+\sqrt{k_n}}{n} = \frac{4C_y^+}{n^{\alpha}}
    \end{equation}
    For the lower bound, we have the bound for $t\geq \frac{\alpha}{2C_y^-}n^{3\alpha-1}\log\left(n(C_y^+)^{\frac{1}{\alpha}}\right)\geq \frac{1}{c_n^j}\log(p_n\sqrt{k_n}||D_j^n||)$ by monotonicity. Indeed,
    \begin{equation}
        \cosh\left(c_n^jt\right) + \langle \bar{D}_j^n|s_j\bar{w}^+_j(0) \rangle\sinh\left(c_n^jt\right) \geq \frac{1}{2}e^{c_n^jt}\left(1+\langle \bar{D}_j^n|s_j\bar{w}^+_j(0) \rangle\right)
    \end{equation}
    and 
    \begin{equation}
        \sinh\left(c_n^jt\right) + \langle \bar{D}_j^n|s_j\bar{w}^+_j(0) \rangle(\cosh\left(c_n^jt\right)-1) \leq \frac{1}{2}e^{c_n^jt}\left(1+\langle \bar{D}_j^n|s_j\bar{w}^+_j(0) \rangle\right)
    \end{equation}
    which implies that 
    \begin{equation}
        \begin{split}
        ||w_j(t)||_+^2 &\geq \frac{p_n\sqrt{k_n}||D_j^n||\times||w_j(0)||_+^2e^{c_n^jt}\left(1+\langle \bar{D}_j^n|s_j\bar{w}^+_j(0) \rangle\right)
        }{2p_n\sqrt{k_n}||D_j^n||+||w_j(0)||_+^2e^{c_n^jt}\left(1+\langle \bar{D}_j^n|s_j\bar{w}^+_j(0) \rangle\right) 
        }\\
        &\geq p_n\sqrt{k_n}||D_j^n||\frac{||w_j(0)||_+^2\left(1+\langle \bar{D}_j^n|s_j\bar{w}^+_j(0) \rangle\right)}{2+||w_j(0)||_+^2\left(1+\langle \bar{D}_j^n|s_j\bar{w}^+_j(0) \rangle\right)}\\
        &\geq p_n\sqrt{k_n}||D_j^n||\frac{||w_j(0)||_+^2}{2+||w_j(0)||_+^2}
        \end{split}
    \end{equation}
    since $\langle \bar{D}_j^n|s_j\bar{w}^+_j(0) \rangle\geq 0$. Finally, we obtain the desired lower bound.
    \begin{equation}
        \mu(t) \geq \frac{2}{np_n}\min_i\sum_j^p||w_j(t)||_+^21_{j,i} = \frac{2\min_j||w_j(t)||_+^2}{np_n} \geq \frac{2C_y^-}{n^{\alpha}}\min_j\frac{||w_j(0)||_+^2}{2+||w_j(0)||_+^2}
    \end{equation}
\end{proof}

\subsection{Lemma \ref{lem:speed_at_init}}

\begin{proof}\textbf{of Lemma \ref{lem:speed_at_init}}\\
    Recall that, in this proof, $p$ and $n$ are independent parameters of the system. Let us recall the equation of the local-PL curvature on the system that was found in equation (\ref{eq:local_curvature_symmetric}).
    \begin{equation}
    \label{lemma9:curvature_eq}
        \begin{split}
            \mu(0) &= \frac{2}{np}\sum_{j=1}^p(w_j(0)^TXP_jR(0))^2+|a_j(0)|^2||XP_jR(0)||^2\\
            &= \frac{2}{n}R(0)^T\left(\frac{1}{p}\sum_{j=1}^pP_j^TX^Tw_j(0)w_j^T(0)XP_j + |a_j(0)|^2P_j^TX^TXP_j\right)R(0)\\
            &= \frac{2}{n}R(0)^TM(0)R(0)\\
        \end{split}
    \end{equation}
    Recall that we note $P_j$ the diagonal matrix with diagonal $1_{j,i} = \mathbb{1}_{\langle w_j|x_i\rangle_+ >0}$. This means that since all terms in the sums can be computed from $a_j$ and $\langle w_j|x_i\rangle_+$, which are mutually independent random variable, that variables in the sum are mutually independent as well, and by Central Limit Theorem we have \[M(0) = \mathbb{E}_{w,a}\left[P^TX^Tww^TXP+|a|^2P^TX^TXP\right] + \frac{\zeta_p}{\sqrt{p}},\]
    where the expectancy is taken over the neurons of the network, but not over the data $X$ and $Y$, and \[\zeta_p \underset{p\rightarrow+\infty} {\longrightarrow}\zeta\sim\mathcal{N}\left(0,\mathbb{V}_{w,a}(P^TX^Tww^TXP+|a|^2P^TX^TXP)\right) = \mathcal{N}(0,\mathbb{V}(\zeta)).\]
    We now apply the Central Limit Theorem on the residual $R(0)$.
    \begin{equation}
    \label{lemma9:residual_eq}
             R(0) = Y - \frac{1}{p}\sum_{j=1}^pa_jP_jX^Tw_j = Y - \mathbb{E}_{w,a}[aPX^Tw] - \frac{\xi_p}{\sqrt{p}} = \tilde{Y} - \frac{\xi_p}{\sqrt{p}}
    \end{equation}
    with $\xi_p\underset{p\rightarrow+\infty} {\longrightarrow}\xi\sim\mathcal{N}\left(0,\mathbb{V}_{w,a}(aPX^Tw)\right) = \mathcal{N}(0,\mathbb{V}_{w,a}(\xi))$. Now, using equations \ref{lemma9:curvature_eq} and \ref{lemma9:residual_eq}, we have that
    \begin{equation}
        \mu(0)\underset{p\rightarrow+\infty}{\longrightarrow}\frac{2}{n}\mathbb{E}_{w,a}\left[(w^TXP\tilde{Y})^2+|a|^2||XP\tilde{Y}||^2\right] = \frac{\beta_0}{n}
    \end{equation}
    and for the next order
    \begin{equation}
        \sqrt{p}\left(n\mu(0)-\beta_0\right)\underset{p\rightarrow+\infty}{\longrightarrow}\mathcal{N}(0,\gamma_0^2)
    \end{equation}
    with 
    \begin{equation}
        \begin{split}
            \gamma_0^2 &= \tilde{Y}^T\mathbb{V}(\zeta)\tilde{Y} + 2\mathbb{V}(\xi)\mathbb{E}_{w,a}\left[P^TX^Tww^TXP+|a|^2P^TX^TXP\right]\tilde{Y}\\ 
        \end{split}
    \end{equation} 
\end{proof}

\subsection{Theorem \ref{thm:phase_transition}}

\begin{proof}\textbf{of Theorem \ref{thm:phase_transition}}\\
    We consider the setting of Proposition \ref{prop:group_examples} but with a fixed number of neuron $p$, and as in its proof, we focus on one specific neuron $j$ for which we suppose $s_j=1$. We can rewrite equation \ref{norm}.
    \begin{equation}
        ||w_j(t)||_+^2 = \frac{p\sqrt{k_n}||D_j^n||||w_j(0)||_+^2
        \left(\cosh\left(c_n^jt\right) + \langle \bar{D}_j^n|\bar{w}_j^+(0) \rangle\sinh\left(c_n^jt\right)\right)}{p\sqrt{k_n}||D_j^n||+||w_j(0)||_+^2 
        \left(\sinh\left(c_n^jt\right) + \langle \bar{D}_j^n|\bar{w}_j^+(0) \rangle(\cosh\left(c_n^jt\right)-1)\right)}
    \end{equation}
    Let us rewrite the loss of the group $j$.
    \begin{equation}
        \begin{split}
        L^j(t) &= \frac{1}{2k_n}\sum_{i=1}^{k_n}(r_i^j)^2\\
        &= \frac{1}{2k_n}\sum_{i=1}^{k_n}\left(y_i^j - \frac{||w_j||_+}{p}\langle \bar{w}_j^+|x_i^j\rangle\right)^2\\
        &= \frac{1}{2}\left[ ||D_j^n||^2 - \frac{2}{\sqrt{k_n}p}||D_j^n||\langle \bar{D}_j^n|\bar{w}^+_j\rangle ||w_j||^2_+ + \frac{1}{k_np^2}||w_j||^4_+\right]
        \end{split}
    \end{equation}
    Let $t_n^j(\kappa) = \frac{1}{c_n^j}\log(\kappa p\sqrt{k_n}||D_j^n||)$, where $c_n^j=\frac{2\sqrt{k_n}||D_j^n||}{n}$, which depends on the variable $\kappa>0$. We have
    \begin{equation}
        ||w_j(t_n^j(\kappa))||_+^2 = p\sqrt{k_n}||D_j^n|| \frac{\kappa||w_j(0)||_+^2\left(1 + \frac{K(j,n)^2}{\kappa^2} + \langle \bar{D}_j^n|\bar{w}^+_j(0)\rangle\left(1-\frac{K(j,n)^2}{\kappa^2}\right)\right)}{2+\kappa||w_j(0)||_+^2\left(1 - \frac{K(j,n)^2}{\kappa^2} + \langle \bar{D}_j^n|\bar{w}^+_j(0)\rangle\left(1-\frac{K(j,n)}{\kappa}\right)^2\right)}
    \end{equation}
    with $K(j,n) = \frac{1}{\sqrt{k_n}p||D_j^n||}$. Moreover, we have 
    \begin{equation}
        \langle \bar{D}_j^n|s_j\bar{w}^+_j(t_n^j(\kappa)) \rangle = \frac{1 - \frac{K(j,n)^2}{\kappa^2} + \langle \bar{D}_j^n|\bar{w}^+_j(0)\rangle\left(1+\frac{K(j,n)^2}{\kappa^2}\right)}{1 + \frac{K(j,n)^2}{\kappa^2} + \langle \bar{D}_j^n|\bar{w}^+_j(0)\rangle\left(1-\frac{K(j,n)^2}{\kappa^2}\right)}.
    \end{equation}
    Thus, by taking $n$ large enough, there exists $\kappa(j,n,\varepsilon)$ such that $L^j(t_n^j(\kappa(j,n,\varepsilon))) = L^j(t_n^j(\varepsilon)) = \varepsilon ||D_j^n||^2$. For simplification, we use $t_n^j(\varepsilon) = t_n^j(\kappa(j,n,\varepsilon))$. Moreover, $\kappa(j,n,\varepsilon) \rightarrow \kappa^j(\varepsilon)$ when $n$ goes to infinity.
    \begin{equation}
        L^j(t_n^j(\varepsilon)) \rightarrow \frac{1}{2}\left[||D_j^{\infty}|| - \frac{\kappa^j(\varepsilon)||w_j(0)||_+^2}{2+\kappa^j(\varepsilon)||w_j(0)||_+^2}\right]^2 = \frac{\varepsilon}{2} ||D_j^{\infty}||^2
    \end{equation}
    This shows that $\kappa^j(\varepsilon) = \frac{2}{||w_j(0)||_+^2}\frac{||D_j^{\infty}||\left(1-\sqrt{\varepsilon}\right)}{1+||D_j^{\infty}||\left(1-\sqrt{\varepsilon}\right)}$. Thus, we have a phase transition since the lost goes from $1-\varepsilon$ to $\varepsilon$ in a time which, after normalization, goes to 0. The time of the phase transition is
    \begin{equation}
        \frac{t_n^j(\varepsilon)}{t_n} = 4\frac{\log(\kappa(j,n,\varepsilon) p\sqrt{k_n}||D_j^n||)}{c_n^j\sqrt{np}\log(n)} \sim_n \frac{1}{||D_j^{\infty}||}
    \end{equation}
    and its cutoff window is 
    \begin{equation}
        \frac{t_n^j(\varepsilon)-t_n^j(1-\varepsilon)}{t_n} = \log\left(\frac{\kappa(j,n,\varepsilon)}{\kappa(j,n,1-\varepsilon)}\right)\frac{1}{c_n^jt_n} \sim_n \frac{1}{2||D_j^{\infty}||}\log\left(\frac{\kappa^j(\varepsilon)}{\kappa^j(1-\varepsilon)}\right)\frac{1}{\log(n)}
    \end{equation}
    where we recall that $t_n = \frac{\sqrt{np}}{4}\log(n)$. We conclude that the normalized loss thus has at most $p$ phase transition at times $\frac{1}{||D_j^{\infty}||}$. Moreover, the constant in the Theorem is
    \begin{equation}
        C^j(\varepsilon) = \frac{\left(1-\sqrt{\varepsilon}\right)}{1+||D_j^{\infty}||\left(1-\sqrt{\varepsilon}\right)}\frac{1+||D_j^{\infty}||\left(1-\sqrt{1-\varepsilon}\right)}{\left(1-\sqrt{1-\varepsilon}\right)}\sim \frac{1}{\varepsilon}\frac{2}{1+||D_j^{\infty}||}
    \end{equation}
\end{proof}

\subsection{Other results}
\begin{proof}\textbf{of Lemma \ref{lem:homogeneity}}\\
    We verify that $\langle\frac{d}{dt}w_j|w_j\rangle = a_j\frac{d}{dt}a_j$, using the derivations from equations (\ref{eq:derive_a}) and (\ref{eq:derive_w}). Integrating this equality gives the expected invariance.
\end{proof}

\begin{proof}\textbf{of Lemma \ref{lem:average_PL}}\\
    By definition, we have $\mu(t) = \frac{|\nabla L(\theta_t)|^2}{L(\theta_t)}$, and by property of the gradient flow, $|\nabla L(\theta_t)|^2 = -\frac{d}{dt}L(\theta_t)$. Thus, 
    \begin{equation}
        \begin{split}
            \frac{d}{dt}L(\theta_t) &= -\mu(t)L(\theta_t)\\
            \log(L(\theta_t)) - \log(L(\theta(0))) &= -\int_0^t\mu(x)dx\\
            L(\theta_t) &= L(\theta(0))e^{-\langle \mu(t)\rangle t}
        \end{split}
    \end{equation}
\end{proof}

\section{Experiments}
\label{app:experiments}
This Appendix contains additional details on the experiments done in Section~\ref{section:experiments}. Data generation and weight initialization were performed as follows: we initialize all neurons independently as $w_j\sim\mathcal{N}(0,\frac{1}{d}I_d)$ as well as $\frac{a_j}{|a_j|}\sim\mathcal{U}(\{-1,1\})$ and $|a_j| - ||w_j|| \sim \text{Exp}(1)$ which implies $|a_j(0)| \geq ||w_j(0)||$. For the data, we consider $y_i\sim\mathcal{U}([-2,-1]\cup[1,2])$ and $||x_i||\sim\mathcal{U}([1,2])$ in order to control the constants $C_x^-, C_y^- \geq 1$ and $C_x^+,C_y^+ \leq 2$. Finally, in order to fall within the assumptions of Lemma \ref{lem:hyp_data}, we let $\frac{x_i}{||x_i||}\sim\mathcal{U}(\mathbb{S}^{d-1})$ in Section~\ref{exp1}, and $\frac{x_i}{||x_i||}$ be an orthogonal family in Section~\ref{exp2}.

\begin{figure}[t!]
    \centering
    \includegraphics[width=0.7\linewidth]{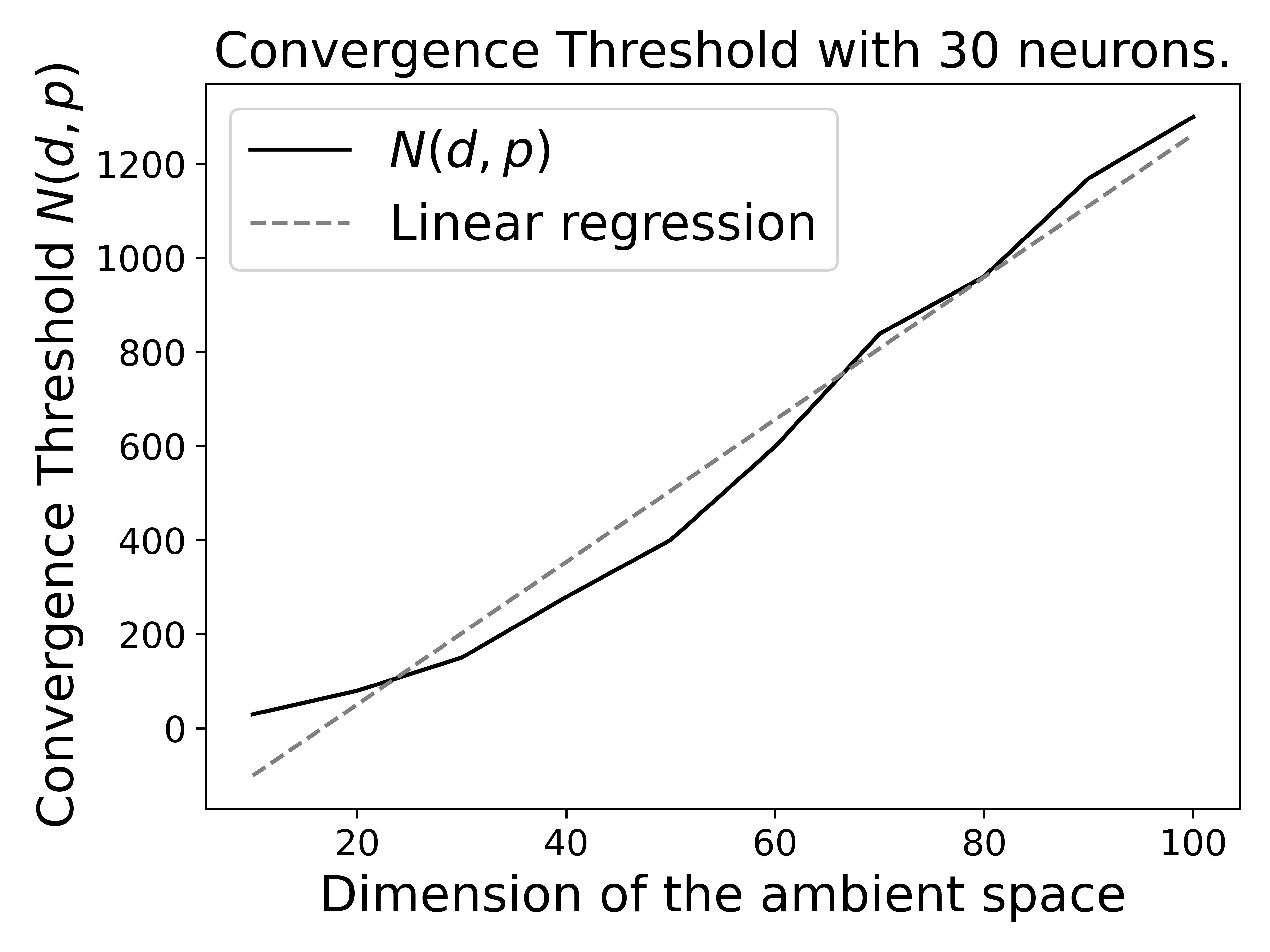}
    \caption{This graph shows the scaling law of the convergence threshold for a fixed number of neurons. It suggests that the scaling is linear in $d$: $N(d,p) = C(p)d$.}
    \label{fig:proba_d}
\end{figure}

\paragraph{Experiment 1.} For the experiment in Figure \ref{fig:proba_CV}, we trained 500 networks in dimension 100, with $n$ between 2500 and 3500, with 25 runs for each value of $n$. We used $p=\lfloor\frac{\log(\frac{n}{\varepsilon})}{\log(\frac{4}{3})}\rfloor+1$ neurons for each experiment with $\varepsilon=0.05$, since this is the optimal threshold obtained in Lemma \ref{lem:all_act_big}. We trained the networks with gradient descent using a learning rate of $1$ for a total time $t_{\infty} = 1.5\times\frac{\sqrt{np}}{4}\log(np)$ and thus $e=\frac{t_{\infty}}{\text{lr}}$ epochs. 

We considered that a network converged as long as its loss went below $\frac{C_y^-}{2n}$, which then guarantees convergence to 0. We thus early stopped the training and declared the loss was exactly 0. Otherwise, the convergence went for all epochs and the network was assumed to not be able to reach 0 loss. In doted line, we interpolate the probability plot using a sigmoid function, and learned automatically the convergence threshold $N(d,p)$. 

For the scaling law on $d$, we fixed $p$ at $30$, and trained networks with dimension varying from 10 to 100, and $n$ ranging from $N(d,p)-15d$ to $N(d,p)+15d$, with step $d$. For each dimension, we interpolate the probability graph using a sigmoid, and plotted the linear trend on Figure \ref{fig:proba_d}. For the scaling in $p$, we fixed $d=30$, and varied $p$ from 50 to 400, and plotted the trend on Figure \ref{fig:proba_p} which shows that the scaling in $p$ is sub-linear.

\begin{figure}[t]
    \centering
    \includegraphics[width=0.7\linewidth]{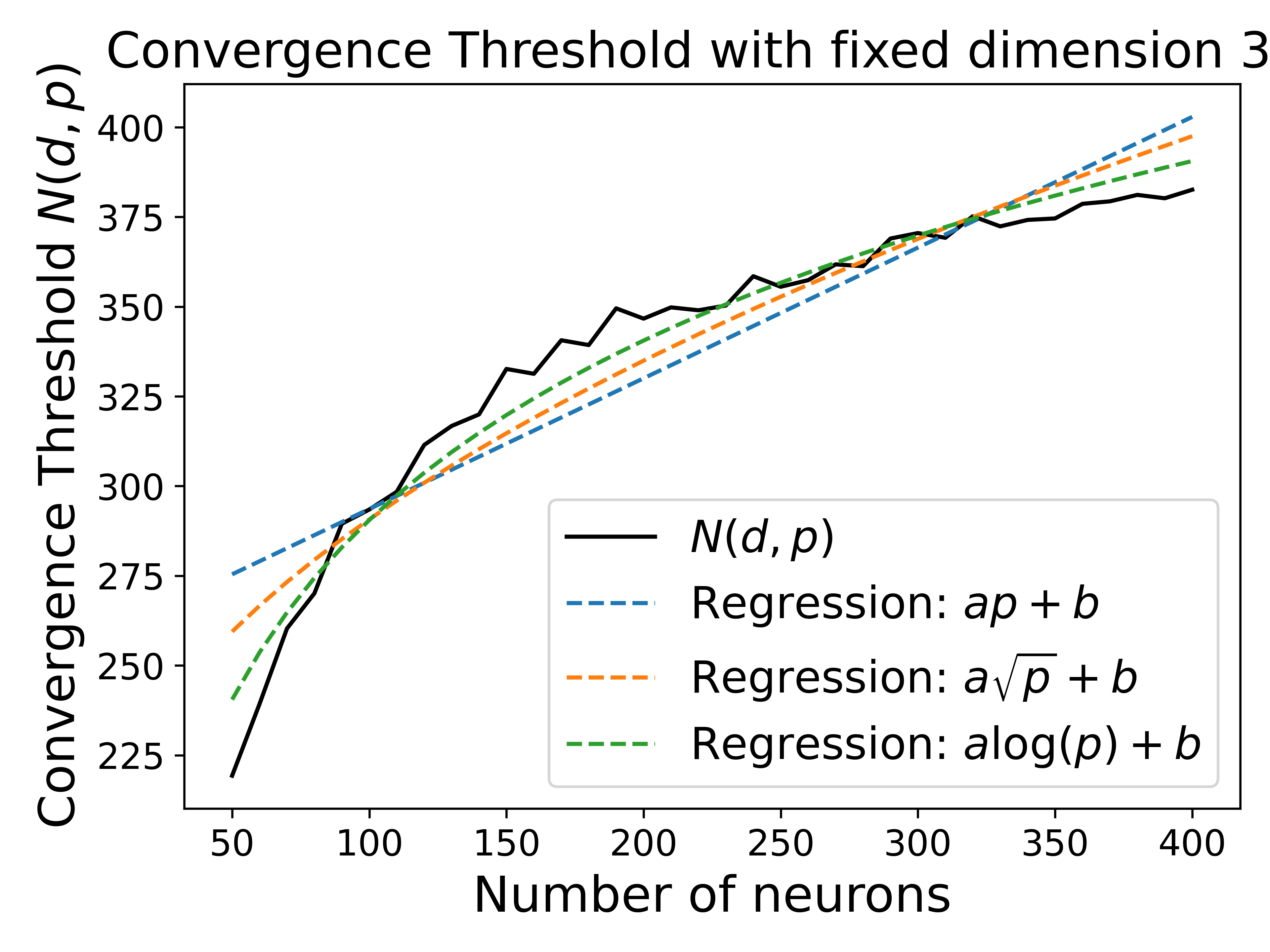}
    \caption{This graph show the scaling law of the convergence threshold for a fixed number of neurons. It suggests that the scaling is not linear in $d$, but it is hard to differentiate between a sub-linear polynomial growth or a logarithmic growth.}
    \label{fig:proba_p}
\end{figure}

\paragraph{Experiment 2.} For this experiment, we trained 500 networks in dimension 2000, with $n$ between 1000 and 2000, with 25 runs for each value of $n$. We used the same number of neurons, learning rates, and epochs as in experiment 1. Let us recall the 4 measures we plotted on the Figure \ref{fig:CV_speed}:
\begin{enumerate}
    \item The instantaneous local-PL curvature at the end of the training, $\mu(t_{\infty}) = \log\left(\frac{L(t_{\infty}-1)}{L(t_{\infty})}\right)$,
    \item The average-PL curvature throughout the training, $\langle \mu_{\infty}\rangle = \log\left(\frac{L(0)}{L(t_{\infty})}\right)$,
    \item The lower bound on the local-PL at the end of the training, $\mu_{\text{low}} = \frac{2}{n}\min_{i}\frac{1}{p}\sum_{j=1}^p|a_j|^21_{j,i}$,
    \item The upper bound on the local-PL at the end of the training, $\mu_{\text{upp}} = \frac{16}{n}\max_{i}\frac{1}{p}\sum_{j=1}^p|a_j|^21_{j,i}$.
\end{enumerate}
Each of the slope being close to $-\frac{1}{2}$, we conclude from this log-log graph that $\langle \mu_{\infty}\rangle = \frac{K}{\sqrt{n}}$ as foreseen in Conjecture \ref{conjecture}.

\begin{figure}[t]
    \centering
    \includegraphics[width=0.8\linewidth]{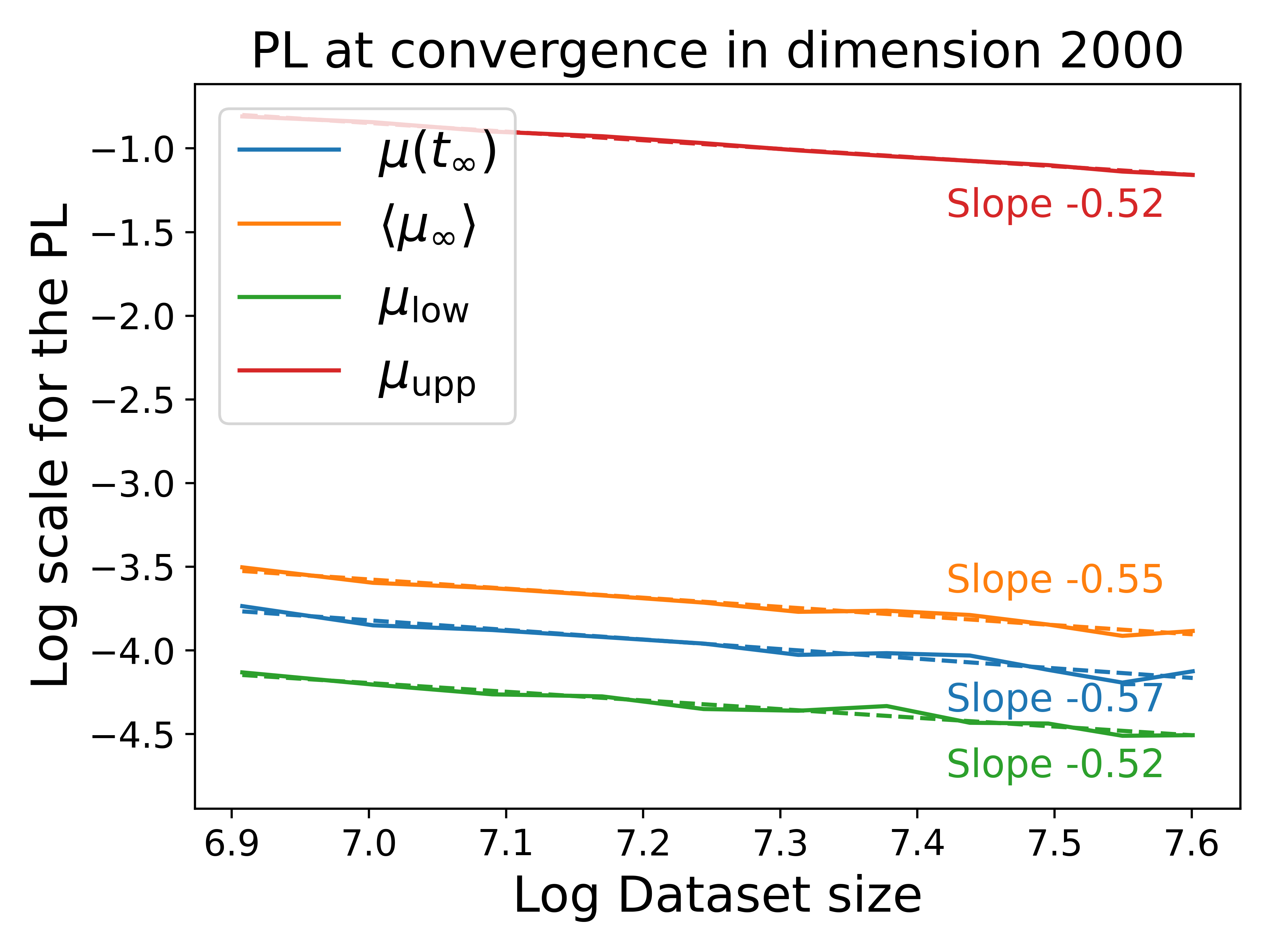}
    \caption{Scaling laws in log-log for different measures of the local-PL curvature in dimension 2000. Each curve is in fact linear with slope close to $-\frac{1}{2}$, which is expected by Conjecture \ref{conjecture}.}
    \label{fig:CV_speed}
\end{figure}
\medbreak
Each plot's related experiments were performed on a MacBook Air under 2 hours without acceleration materials.

\section{Additional results}
\subsection{Collapse of the second layer}
\label{app:collapse_a_j}

Similar to the early alignment phenomenon described by \citet[Theorem 2]{boursier2024early, boursier2024simplicitybiasoptimizationthreshold}, where the neurons can rotate and collapse to align on a single vector preventing minimization of the loss, the weights $a_j$ of the second layer can also collapse on a single direction. Under the hypothesis that $|a_j(0)| \geq ||w_j(0)||$, the scalar $a_j$ cannot change sign, which prevents this scenario in the article's results. But if $|a_j(0)| < ||w_j(0)||$, they can change sign, and prevent global minimization even when the neurons are correctly initialized. Proposition \ref{prop:counter_ex} gives an example of such collapse in low dimension.

\begin{proposition}
    \label{prop:counter_ex}
    Suppose that $d=n=p=2$. Let $(x_1,x_2)$ be the canonical basis of $\mathbb{R}^2$, with the outputs satisfying $y_1y_2<0, \lambda = |\frac{y_2}{y_1}|$. Let $|a_1(0)|, |a_2(0)| \leq \delta$, and let $\min_{j,i}\langle w_j(0)|x_i\rangle >0$. Then, for $\delta$ small enough, $y_1$ large enough, and 
    \begin{equation}
        \sqrt{\max_{j,i}\langle w_j(0)|x_i\rangle\frac{8}{y_1}}\leq \lambda \leq \frac{\min_{j,i}\langle w_j(0)|x_i\rangle}{\max_{j,i}\langle w_j(0)|x_i\rangle}
    \end{equation} 
    we have $\lim_{t\rightarrow+\infty}L(\theta_t) >0$.
\end{proposition}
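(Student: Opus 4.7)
Take without loss of generality $y_1 > 0 > y_2$, write $|y_2| = \lambda y_1$, and let $m = \min_{j,i} w_{j,i}(0)$, $M = \max_{j,i} w_{j,i}(0)$. The strategy is to show that both $a_1(t)$ and $a_2(t)$ remain non-negative for all $t \geq 0$. Once this is established, since $w_{j,i}(0) > 0$ and the ReLU freezes any $w_{j,i}$ that reaches $0$, the quantities $u_j := (w_{j,1})_+$ and $v_j := (w_{j,2})_+$ stay non-negative. The network output on $x_2$ is then $h_{\theta_t}(x_2) = \tfrac12(a_1 v_1 + a_2 v_2) \geq 0 > y_2$, which gives the uniform lower bound $L(\theta_t) \geq y_2^2/(2n) = y_2^2/4 > 0$, proving $\lim_{t \to \infty} L(\theta_t) > 0$.

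In the orthogonal setting with $x_i = e_i$, the flow takes the explicit form
\begin{equation*}
\dot a_j = \tfrac12 (r_1 u_j + r_2 v_j), \qquad \dot w_{j,i} = \tfrac12 a_j r_i \cdot 1_{w_{j,i} > 0},
\end{equation*}
with $r_i = y_i - \tfrac12 \sum_k a_k (w_{k,i})_+$. Because $|a_j(0)| \leq \delta$, the initial predictions are tiny, $|h_{\theta_0}(x_i)| \leq \delta M$, so $r_i(0) = y_i + O(\delta M)$ and the initial drift
\begin{equation*}
\dot a_j(0) = \tfrac{y_1}{2}\bigl(u_j(0) - \lambda v_j(0)\bigr) + O(\delta M) \geq \tfrac{y_1}{2}(m - \lambda M) + O(\delta M)
\end{equation*}
is strictly positive for $\delta$ small enough, using $\lambda \leq m/M$. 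Both $a_j$ thus become strictly positive after a transient of length $O(\delta/y_1)$, at the end of which $v_j$ is still essentially equal to $v_j(0)$.

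The hard part is to propagate this positivity to all subsequent times. The plan is a timescale-separation argument. As long as $a_j > 0$ and $v_j > 0$, we have $r_2 \leq y_2 < 0$ and hence $\dot v_j \leq -\tfrac12 a_j |y_2| = -\tfrac12 a_j \lambda y_1$, i.e., $v_j$ decays exponentially at rate proportional to $\lambda y_1$. Meanwhile, $h_{\theta_t}(x_1)$ builds up from $O(\delta M)$ toward $y_1$ on a slower timescale. The hypothesis $\lambda \geq \sqrt{8 M / y_1}$ is designed precisely so that $v_j$ reaches a negligible value before $h_{\theta_t}(x_1)$ can overshoot $y_1$ enough to render $r_1$ significantly negative. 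Within this regime, one can track a Lyapunov-type quantity such as $\Phi_j = a_j^2 - v_j^2$, whose derivative $\dot\Phi_j = a_j r_1 u_j$ is non-negative whenever $r_1 \geq 0$; combined with the invariant $a_j^2 - \|w_j\|^2 \equiv a_j(0)^2 - \|w_j(0)\|^2$ from Lemma~\ref{lem:homogeneity}, this is used to force $a_j(t)$ to remain bounded away from zero, completing the argument.

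The main obstacle is turning this timescale-separation intuition into a rigorous global control: simultaneously tracking $r_1(t)$, $v_j(t)$, and the Lyapunov function over the full trajectory, rather than only near initialization. The two quantitative inequalities in the hypothesis play complementary roles — $\lambda \leq m/M$ guarantees the correct sign of the initial drift on $a_j$, while $\lambda \geq \sqrt{8 M/y_1}$ ensures the fast decay of $v_j$ dominates any tendency of $r_1$ to swing negative — and together they close the argument, ruling out $a_j$ ever crossing zero and thereby forcing the loss to stay bounded below by $y_2^2/4$.
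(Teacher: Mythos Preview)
Your two-phase picture (first both $a_j$ are pushed positive, then $v_j$ decays) matches the paper's, but the closing step has a genuine gap. The Lyapunov quantity $\Phi_j = a_j^2 - v_j^2$ being non-negative only gives $|a_j| \geq v_j$, which does not rule out $a_j$ crossing zero at a moment when $v_j$ is small. And invoking the invariant $a_j^2 - \|w_j\|^2 \equiv a_j(0)^2 - \|w_j(0)\|^2$ from Lemma~\ref{lem:homogeneity} cannot help here: this proposition is precisely the regime where Assumption~\ref{assump:asymmetry} fails ($|a_j(0)| \leq \delta \ll \|w_j(0)\|$), so that invariant is strictly negative and yields $a_j^2 < u_j^2 + v_j^2$, the opposite of what you would want. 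You also describe $v_j$ as decaying ``exponentially'', but $\dot v_j = \tfrac12 a_j r_2 \leq -\tfrac12 a_j |y_2|$ is not proportional to $v_j$: as long as $a_j$ is bounded below, $v_j$ reaches zero in \emph{finite time} and is then frozen there by the ReLU. You even mention this freezing in your first paragraph, but only use it to say $u_j,v_j\geq 0$ (which is true by definition anyway); you never exploit its real consequence.

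That finite-time extinction is the key you are missing, and it makes infinite-time control of $a_j$ unnecessary. The paper works entirely on a finite horizon: a short interval $[0,T]$ of length $O(\delta)$ on which crude a-priori bounds show all $\langle w_j|x_i\rangle$ barely move while both $a_j$ are driven positive (this uses $\lambda \leq m/M$); then an interval $[T,\mathcal{T}]$ on which a self-consistent uniform bound $N$ on all quantities shows $\langle w_j|x_2\rangle$ hits zero before $a_j$ can fall back to zero---the hypothesis $\lambda \geq \sqrt{8M/y_1}$ is exactly what closes the constraint on $N$. Once both $v_j$ are extinct, $h_\theta(x_2) = 0$ for all subsequent times \emph{regardless of the sign of $a_j$}, and $L(\theta_t) \geq y_2^2/4$ for all $t \geq \mathcal{T}$ follows immediately. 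No Lyapunov function and no control of $a_j$ beyond $\mathcal{T}$ are needed.
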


The proof relies on the ratio between outputs being large $\lambda$, in order to steer the $a_j$ to change signs, but not too large to then make the neuron go extinct before the signs of $a_j$ may change again. This traps the network in a state of sub-optimal loss, and if $a_j$ were initialized as large as the vectors, this collapse could not have happened.

\begin{proof}\textbf{of Proposition \ref{prop:counter_ex}}
    Without loss of generality, let us suppose $y_1>0$ and $y_2<0$. We will show that there are values of $\lambda,\varepsilon$ for which the system will not converge. The derivatives of $a_j$ at the beginning of the dynamics writes
    \begin{equation}
        \begin{split}
            \frac{d}{dt}a_j &= \frac{1}{4}(r_1\langle w_j|x_1\rangle + r_2\langle w_j|x_2\rangle) \\
            \left|\frac{d}{dt}a_1 - \frac{y_1}{4}(\langle w_1|x_1\rangle - \lambda\langle w_1|x_2\rangle)\right|&\leq \max(|a_1|, |a_2|)\max(||w_1||,||w_2||)\\
            \left|\frac{d}{dt}a_2 - \frac{y_1}{4}(\langle w_2|x_1\rangle - \lambda\langle w_2|x_2\rangle)\right|&\leq \max(|a_1|, |a_2|)\max(||w_1||,||w_2||)
        \end{split}
    \end{equation}
    and the derivatives of $\langle w_j|x_i\rangle $ are
    \begin{equation}
        \begin{split}
            \frac{d}{dt}\langle w_j|x_i\rangle &= \frac{a_jr_i\mathbb{1}_{j,i}}{2}\\
            \left|\frac{d}{dt}\langle w_j|x_1\rangle - \frac{a_jy_1\mathbb{1}_{j,1}}{2}\right| &\leq \max(|a_1|, |a_2|)\max(||w_1||,||w_2||)\\
            \left|\frac{d}{dt}\langle w_j|x_2\rangle + \lambda\frac{a_jy_2\mathbb{1}_{j,2}}{2}\right| &\leq \max(|a_1|, |a_2|)\max(||w_1||,||w_2||)\\
        \end{split}
    \end{equation}
    Now suppose that for $t\leq T$, $|a_j|,||w_j||\leq M$ and $\langle w_j|x_i\rangle\geq m>0$, we have 
    \begin{equation}
        \begin{split}
            a_j(t) &\geq -\delta + \left(\frac{y_1}{4}(m-\lambda M) - M^2\right)t\\
            a_j(t) &\leq \delta + \left(\frac{y_1}{4}(M-\lambda m) + M^2\right)t\\
        \end{split}
    \end{equation}
    Thus, for $T> \frac{\delta}{\left(\frac{y_1}{4}(m-\lambda M) - M^2\right)}$ with $\lambda \geq \frac{m}{M}$ and $y_1\geq \frac{4M^2}{m-\lambda M}$, we have $a_j(T)> 0$. We now wish to find the constants $M,m$ such that the previous equation will hold. To find the constraint on $m$ and $M$, let us write
    \begin{equation}
        \begin{split}
            \langle w_j|x_1\rangle &\geq \langle w_j(0)|x_1\rangle - \left(\frac{1}{2}y_1M + M^2\right)t\\
            \langle w_j|x_2\rangle &\geq \langle w_j(0)|x_2\rangle - \left(\frac{\lambda}{2}y_1M + M^2\right)t\\
            \langle w_j|x_1\rangle &\leq \langle w_j(0)|x_1\rangle + \left(\frac{1}{2}y_1M + M^2\right)t\\
            \langle w_j|x_2\rangle &\leq \langle w_j(0)|x_2\rangle + \left(\frac{\lambda}{2}y_1M + M^2\right)t\\
        \end{split}
    \end{equation}
    Thus, the constraints are
    \begin{equation}
        \begin{split}
            m &\geq \min\left(\min_{j,i}\langle w_j(0)|x_2\rangle, \delta\right) -\delta\frac{2y_1M + 4M^2}{y_1(m-\lambda M) - 4M^2} \\
            M &\leq \max\left(\max_{j,i}\langle w_j(0)|x_2\rangle,\delta\right)+\delta\frac{2y_1 M+ 4M^2}{y_1(m-\lambda M) - 4M^2}
        \end{split}
    \end{equation}
    We see that the constraint are satisfied with $m\geq \min_{j,i}\langle w_j(0)|x_2\rangle-2\delta>0$ and $M\leq\max_{j,i}\langle w_j(0)|x_2\rangle+2\delta$ if: $\delta$ is small enough, $y_1$ is large enough, and $\lambda < \frac{\min_{j,i}\langle w_j(0)|x_2\rangle}{\max_{j,i}\langle w_j(0)|x_2\rangle}$. Thus, there exists $T>0$ such that at time $T$, we have $a_1(T),a_2(T)> 0$, and no neurons went extinct. 
    
    Now, let us show that neurons $\langle w_j|x_2\rangle$ will go to 0 for some time $\mathcal{T}>0$, while the neurons $a_j$ stay positive. We can use the same equations as before, with this time $|a_j|,||w_j||\leq N$ for $t\leq \mathcal{T}$, and get 
    \begin{equation}
        \langle w_j|x_2\rangle \leq \langle w_j(0)|x_2\rangle - \left(\frac{\lambda}{2}y_1N - N^2\right)t
    \end{equation}
    Thus, for $\mathcal{T} = \frac{2\max_j\langle w_j(0)|x_2\rangle}{\lambda y_1N - 2N^2}$ and $\lambda y_1 \geq 2N$, we have extinction of the neurons. To find the constraint on $N$, use the bounds on the growth of $a_j$ and $\langle w_j(0)|x_1\rangle$. The constraint is
    \begin{equation}
        N \leq \max\left(\max_j\langle w_j(0)|x_1\rangle, \delta\right)+2\max_j\langle w_j(0)|x_2\rangle\frac{y_1 + 2N}{\lambda y_1 - 2N}
    \end{equation}
    Thus, the constraints are satisfied with $N \leq \max_{j,i}\langle w_j(0)|x_i\rangle(1 + \frac{3}{\lambda})$ as long as $y_1$ is large enough, and 
    \begin{equation}
        \lambda \geq \sqrt{\max_{j,i}\langle w_j(0)|x_i\rangle\frac{8}{y_1}}.
    \end{equation}
    After time $\mathcal{T}$, the neurons $\langle w_j(0)|x_2\rangle$ went extinct, and thus we have $L(\theta_t) \geq \frac{y_2^2}{4}>0$.
\end{proof}

\subsection{Non-uniqueness of the gradient flow}
\label{app:non_uniqueness}

Since $\sigma=$ReLU is non differentiable at 0, the gradient flow equation might have multiple solution for a single initialization. To see this, let us take the example of orthogonal data, we have for every neuron $w_j$ and data $x_i$, $\frac{d}{dt}\langle w_j|x_i\rangle_+ = \frac{r_ia_j}{n}||x_i||^21_{i,j}$. As long as $\langle w_j|x_i\rangle_+ >0$, then $1_{i,j}=1$ and the neuron can change, and if $\langle w_j|x_i\rangle <0$, then $1_{i,j}=0$, so the neuron doesn't change anymore, and cannot become active. We are interested in the case when $\langle w_j|x_i\rangle =0$, which is the non-differentiable point of ReLU. 

Suppose that for $t\in[t_1,t_2]$, we have $a_j(t)r_i(t)>0$ and $\langle w_j(t_1)|x_i\rangle =0$. Then for each $\tilde{t}\in[t_1,t_2[$ there exist trajectories $\theta^{\tilde{t}}$ such that at $1_{\langle w_j(t)|x_i\rangle} \underset{t\rightarrow\tilde{t}^+}{\rightarrow} 1$ and $1_{\langle w_j(\tilde{t})|x_i\rangle}=0$. This means that there exist trajectories such that the neuron $\langle w_j|x_i\rangle$ start growing from $\tilde{t}$ even if the neuron was previously deactivated. 
The trajectory $\theta^{\tilde{t}}$ is as follow: for $t<\tilde{t}$, $\theta^{\tilde{t}}(t)=\theta^{\tilde{t}}(t_1)$, and then $\theta^{\tilde{t}}$ solve the gradient flow equation without $1_{i,j}$ in the derivative of $\langle w_j|x_i\rangle$.

Although there are different possible trajectories for a single initialization of the neurons, only one of them is realistic in the sense that it is represent what happens in practice: the trajectory where neuron don't reactivate alone, which is the limit trajectory of the trajectories from the gradient descent for small step-size.

\end{document}